\newcommand{\inner}[1]{\left\langle#1\right\rangle}
\def\X{\mathcal{X}}
\def\R{\mathbb{R}}
\newcommand{\norm}[1]{\left\|#1\right\|}
\def\argmax{\mathop{\rm arg\,max}\limits}
\def\argmin{\mathop{\rm arg\,min}\limits}
\def\minop{\mathop{\rm min}\limits}
\def\maxop{\mathop{\rm max}\limits}
\def\sign{\mathop{\rm sign}\limits}
\def\min{\mathop{\rm min}\nolimits}
\def\max{\mathop{\rm max}\nolimits}
\newcommand{\ours}{\textrm{PNPC}\xspace}
\theoremstyle{plain}
\newtheorem{theorem}{Theorem}[section]
\newtheorem{proposition}[theorem]{Proposition}
\newtheorem{lemma}[theorem]{Lemma}
\theoremstyle{definition}
\newtheorem{definition}[theorem]{Definition}
\theoremstyle{remark}
\newcommand{\vv}[1]{{\color{black}{#1}}}
\icmltitlerunning{Provably Adversarially Robust Nearest Prototype Classifiers}
\begin{document}

\twocolumn[
\icmltitle{Provably Adversarially Robust Nearest Prototype Classifiers}



\icmlsetsymbol{equal}{*}

\begin{icmlauthorlist}
\icmlauthor{Václav Voráček}{yyy}
\icmlauthor{Matthias Hein}{yyy}
\end{icmlauthorlist}

\icmlaffiliation{yyy}{University of T{\"u}bingen, Germany}

\icmlcorrespondingauthor{Václav Voráček}{vaclav.voracek@uni-tuebingen.de}
\icmlcorrespondingauthor{Matthias Hein}{matthias.hein@uni-tuebingen.de}

\icmlkeywords{Machine Learning, ICML}

\vskip 0.3in
]



\printAffiliationsAndNotice{}  

\begin{abstract}

Nearest prototype classifiers (NPCs) assign to each input point the label of the nearest prototype with respect to a chosen distance metric. A direct advantage of NPCs is that the decisions are interpretable. Previous work could provide lower bounds on the minimal adversarial perturbation in the $\ell_p$-threat model when using the same $\ell_p$-distance for the NPCs. In this paper we provide a complete discussion on the complexity when using $\ell_p$-distances for decision and $\ell_q$-threat models for certification for $p,q \in \{1,2,\infty\}$. In particular we provide scalable algorithms for the \emph{exact} computation of the minimal adversarial perturbation when using $\ell_2$-distance and improved lower bounds in other cases. Using efficient improved lower bounds we train our \vv{\textbf{P}rovably adversarially robust \textbf{NPC} (\ours)}, for MNIST which have better $\ell_2$-robustness guarantees than neural networks. Additionally, we show up to our knowledge the first certification results w.r.t. to the LPIPS perceptual metric which has been argued to be a more realistic threat model  for image classification than $\ell_p$-balls. Our \ours has on CIFAR10 higher certified robust accuracy than the empirical robust accuracy reported in \cite{laidlaw2021perceptual}. The code is available in our~\href{https://github.com/vvoracek/Provably-Adversarially-Robust-Nearest-Prototype-Classifiers}{repository}.



\end{abstract}


\section{Introduction}
\label{sec:intro}

The vulnerability of neural networks against adversarial manipulations \citep{SzeEtAl2014,GooShlSze2015} is a major problem for their real world deployment in safety critical systems such as autonomous driving and medical applications. However, the problem is not restricted to neural networks as it has been shown that basically all machine learning algorithms are vulnerable to adversarial perturbations e.g. nearest neighbor methods (NN) \cite{wang2018analyzing}, kernel SVMs \cite{XuCarMan2009, biggio2013evasion, russu2016secure, HeiAnd2017}, decision trees \cite{papernot2016transferability, bertsimas2018robust, chen2019robust,andriushchenko2019provably}. In the area of neural networks this lead to an arm's race between novel empirical defenses and attacks and even initially promising defenses were broken later on \cite{AthEtAl2018}. This still happens for papers published at top machine learning conferences \cite{tramer2020adaptive,croce2020provable}  despite more reliable attacks for adversarial robustness evaluation \cite{croce2020reliable} and guidelines \cite{carlini2019evaluating} being available. 

Thus classifiers with provable adversarial robustness guarantees are highly desirable. For neural networks computation of the exact minimal perturbation turns out to be restricted to very small networks \cite{TjeTed2017}. Instead one derives either deterministic \cite{HeiAnd2017,WonKol2018,GowEtAl18,MirGehVec2018,zhang2019stable,Lee2020Lipschitz,huang2021training,leino2021globallyrobust} or probabilistic guarantees 
\cite{CohenARXIV2019,jeong2021smoothmix} on the robust accuracy. We refer to \cite{li2020sokcertified} for a recent overview. While provable adversarial robustness has been studied extensively for neural networks, the literature for standard classifiers is  scarce, e.g. decision trees \cite{bertsimas2018robust},
boosted decision stumps and trees \cite{chen2019robust,andriushchenko2019provably}, and
nearest neighbour \cite{wang2018analyzing, wang2019evaluating} and nearest prototype classifiers (NPC) \cite{Saralajew2020fast}. NPC are also known as \emph{Learning Vector Quantization (LVQ)}, see \cite{Kohonen1995}, and are directly interpretable, can be used for all data where a distance function is available and have the advantage compared to a nearest neighbour classifier that the prototypes can be learned and thus they are more efficient and achieve typically better generalization performance. Moreover, NPC have a maximum margin nature \cite{NIPS2002_bbaa9d6a} and  \cite{Saralajew2020fast} showed recently how to derive lower bounds on the minimal adversarial perturbation which in turn yield lower bounds on the robust accuracy. \cite{wang2019evaluating} have shown how to compute the minimal adversarial perturbation for nearest neighbor classifiers using the $\ell_2$-distance which applies to NPC as well.

\textbf{Contributions:} we show that the results of \cite{Saralajew2020fast} can be improved in various ways leading to our \ours which perform better both in clean and robust accuracy.

A) We generalize the lower bounds on the minimal adversarial perturbation
\cite{Saralajew2020fast} provided for distances induced by semi-norms to general semi-metrics, thus improving significantly over standard $\ell_p$-based certification. \vv{
The original proof of \cite{Saralajew2020fast} used the absolute homogenity of semi-norms; thus, it do not generalize to semi-metrics.}

B) For NPC using the $\ell_2$-distance we show that the lower bounds of \cite{wang2019evaluating} can be quickly evaluated so that training with them is feasible and show that these bounds improve the ones of \cite{Saralajew2020fast}. Moreover, we improve the certification of \cite{wang2019evaluating} by integrating that the domain in image classification is $[0,1]^d$. For MNIST our $\ell_2$-\ours has the best $\ell_2$-robust accuracy even outperforming randomized smoothing for large radii.
Moreover, we show how to certify exactly $\ell_1$- and $\ell_\infty$-robustness for $\ell_2$-NPC and in this way can certify multiple-norm robustness and show that our $\ell_2$-\ours outperforms 
the multiple-norm robustness guarantees of \cite{croce2020provable}.

C) For the $\ell_1$-and $\ell_\infty$-NPC we provide novel lower bounds and analyze their complexity. For $\ell_\infty$-NPCs we thus improve over the bounds given in \cite{Saralajew2020fast}.

D) As the $\ell_p$-distances are not suited for image classification tasks, we use a neural perceptual metric (LPIPS) \cite{zhang2018unreasonable} as a semi-metric for the NPC and provide robustness guarantees in the perceptual metric. We improve both in terms of clean and certified robust accuracy over the clean and empirical robust accuracy of the adversarially trained ResNet 50 of  \cite{laidlaw2021perceptual} 

\section{Provably Robust NPC Classifiers}\label{sec:proto}
\vv{
Nearest prototype classifiers require for a given input space $\X$ only a (semi)-metric. To compare with previous work, we introduce also a (semi)-norm, which requires a vector-space structure; thus, assuming the existence of a norm is a stronger assumption than the assumption of the existence of a metric.

\begin{definition}
A mapping $d:\X \times \X \rightarrow \R$ is a semi-metric if the following properties holds for any $x,y,z \in \X$:
\begin{itemize}
    \item $d(x,y) \geq 0$ (non-negativity)
    \item $d(x,y) = d(y,x)$ (symmetry)
    \item $d(x,y) \leq d(x,z) + d(z,y)$ (triangle inequality)
\end{itemize}
If we further require that $d(x,y) = 0 \implies x=y$, then the semi-metric becomes a metric.
\end{definition}

\begin{definition}
A mapping $\norm{\cdot}:\X \rightarrow \R$ is a semi-norm if the following properties holds for any $x,y\in \X$, $\alpha \in \R$:
\begin{itemize}
    \item $\norm{x} \geq 0$ (non-negativity)
    \item $\norm{\alpha x} = |\alpha|\norm{x}$ (absolute homogeneity)
    \item $\norm{x+y} \leq \norm{x} + \norm{y}$ (triangle inequality)
\end{itemize}
If we further require $ \norm{x} = 0 \implies x = \mathbf{0}$, then the semi-norm becomes a norm.

\end{definition}
Note that any (semi)-norm $\norm{x}$ induces a (semi)-metric $d$ with $d(x,y) = \norm{x-y}$.
}

We denote by $(w_i)_I$ the set of prototypes. Each prototype is assigned to one class. Then $z \in \R^d$ is classified as
\[ f(z)=\argmin_{y=1,\ldots,K} \minop_{i \in I_y} d(z,w_i),\]
where $I_y$ are the prototypes of class $y$. A nearest neighbor classifier (1NN) can also be understood as NPC where one uses the training set as prototypes and thus are not learned. However, by training prototypes one can achieve better classification performance\vv{, and also robustness, see Table~\ref{tab:MNIST-l2},} with less prototypes meaning that NPC are significantly more efficient than 1NN. 
We note that the classification for a point $z$ with label $y$ is correct if
\[ \minop_{i \in I_y} d(z,w_i) - \minop_{j \in I^c_y} d(z,w_j) \vv{<}  0,\]
where $I^c_y$ is the set of all prototypes not belonging to class $y$ (the complement of $I_y$ in $I$).

\subsection{Provable robustness guarantees for semi-metrics}
Next we define the minimal adversarial perturbation of a point $z$ 
for a semi-metric on $\X$, that is the radius $r$ of the smallest ball $B_d(z,r) = \{x \in \X  \,|\, d(x,z) \leq r\} $ around $z$ such at least one point in $B_d(z,r)$ is classified differently than is $z$. If a point $z$ is misclassified then we define the minimal adversarial perturbation to be zero. \vv{We assume that there is a non-empty set of prototypes for every class; thus, there always exists an adversarial example}.
\begin{definition}
The \textbf{minimal adversarial perturbation} $\epsilon_d(z)$ of $z \in \X$ of a NPC using semi-metric $d$ is defined as
\begin{align*}
     \epsilon_d(z) \hspace{-1mm}=\hspace{-1mm} \min\{ r |\hspace{-1mm} \maxop_{x \in B_d(z,r)}\hspace{-1mm} \big(\minop_{i \in I_y}  d(x,w_i) \hspace{-0.25mm}- \hspace{-0.25mm} \minop_{j \in I^c_y} d(x,w_j)\big) \hspace{-0.5mm}\geq\hspace{-0.5mm} 0\}
    \end{align*}
\vv{If $\minop_{i \in I_y} d(z,w_i) - \minop_{j \in I^c_y} d(z,w_j) \geq 0$ then we set $\epsilon_d(z)=0$.}
\end{definition}
In \cite{Saralajew2020fast} they derive for semi-norms
a lower bound on $\epsilon_d$ and in this way get robustness certificates. We generalize this lower bound to semi-metrics which is considerably more general as $\X$ need not be a vector space. It turns out that the only necessary technical requirement  for the proof is the triangle inequality. \vv{This is unlike~\cite{Saralajew2020fast}, where the proof also required the absolute homogeneity of semi-norms.}
\begin{theorem}\label{thm:trivlb}
Let $(\X,d)$ be a semi-metric space, then it holds for the minimal adversarial perturbation $\epsilon_d(z)$ of $z \in \X$ with correct label $y$: 
\[ \epsilon_d(z) \geq \max\left\{0, \frac{\minop_{j \in I_y^c} d(z,w_j)-\minop_{i \in I_y}d(z,w_i)}{2}\right\}.\]
\end{theorem}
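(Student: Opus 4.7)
The plan is a direct triangle-inequality argument; the crux is choosing which direction of the inequality to apply to which term, so that the error introduced is $+r$ on the ``correct'' side and $-r$ on the ``wrong'' side.

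First I would dispose of the trivial case: if $z$ is already misclassified, then by the convention stated just before the theorem $\epsilon_d(z)=0$, and the right-hand side is $\max\{0,\cdot\}\ge 0$, so the inequality holds. So assume $z$ is correctly classified. Then the quantity inside the $\max$ is non-negative (since $\min_{i\in I_y}d(z,w_i) \le \min_{j\in I^c_y}d(z,w_j)$ when the classifier picks a class-$y$ prototype), so it suffices to show
\[ \epsilon_d(z) \;\ge\; \tfrac{1}{2}\bigl(\min_{j\in I^c_y}d(z,w_j) - \min_{i\in I_y}d(z,w_i)\bigr). \]

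Now pick any $r$ and any $x \in B_d(z,r)$ that realises the definition, i.e.\ satisfies the adversarial condition
\[ \min_{i\in I_y} d(x,w_i) \;\ge\; \min_{j\in I^c_y} d(x,w_j). \]
The key step uses the triangle inequality in two opposite directions, applied pointwise to each prototype and then passed through the $\min$. For every $i\in I_y$, $d(x,w_i)\le d(x,z)+d(z,w_i)\le r+d(z,w_i)$, so taking the minimum over $i\in I_y$ gives the upper bound
\[ \min_{i\in I_y} d(x,w_i) \;\le\; r + \min_{i\in I_y} d(z,w_i). \]
For every $j\in I^c_y$, $d(z,w_j)\le d(z,x)+d(x,w_j)\le r+d(x,w_j)$, hence $d(x,w_j)\ge d(z,w_j)-r$, and taking the minimum over $j\in I^c_y$ yields the lower bound
\[ \min_{j\in I^c_y} d(x,w_j) \;\ge\; \min_{j\in I^c_y} d(z,w_j) - r. \]

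Substituting both bounds into the adversarial condition gives
\[ 0 \;\le\; \bigl(r + \min_{i\in I_y} d(z,w_i)\bigr) - \bigl(\min_{j\in I^c_y} d(z,w_j) - r\bigr), \]
and rearranging yields $r \ge \tfrac{1}{2}\bigl(\min_{j\in I^c_y}d(z,w_j) - \min_{i\in I_y}d(z,w_i)\bigr)$. Since this lower bound on $r$ holds for every admissible $r$ (and for every $x$ witnessing it), it holds for the infimum $\epsilon_d(z)$, which completes the proof.

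The argument really uses nothing beyond symmetry and the triangle inequality, which is exactly the point the authors emphasise: absolute homogeneity of a semi-norm, which was essential in \cite{Saralajew2020fast}, plays no role here, so the statement extends from semi-norms to arbitrary semi-metrics. The only subtle point, and the one I would be most careful about, is keeping the direction of the triangle inequality consistent on each side so that the two $r$'s add (giving the factor $2$) rather than cancel.
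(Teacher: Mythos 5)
Your proof is correct and follows essentially the same route as the paper's: both apply the triangle inequality in the two opposite directions to get the $+r$ and $-r$ slacks (hence the factor $2$) and then pass the bounds through the minima over $I_y$ and $I_y^c$. The only cosmetic difference is that the paper bounds the pairwise difference $d(x,w_i)-d(x,w_j)$ before taking minima, while you bound the two minima separately, which changes nothing of substance.
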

\vv{We note that if the semi-metric $d$ can be written as  $d(x,y)=\norm{x-y}$ for some semi-norm $\norm{\cdot}$, then our bound is equal to the one given in~\cite{Saralajew2020fast} }

\begin{figure*}
\begin{center}
\label{fig:TwoMoons-L2}
\begin{tabular}{l@{\.}l@{\.}l}
\includegraphics[width=0.33\textwidth]{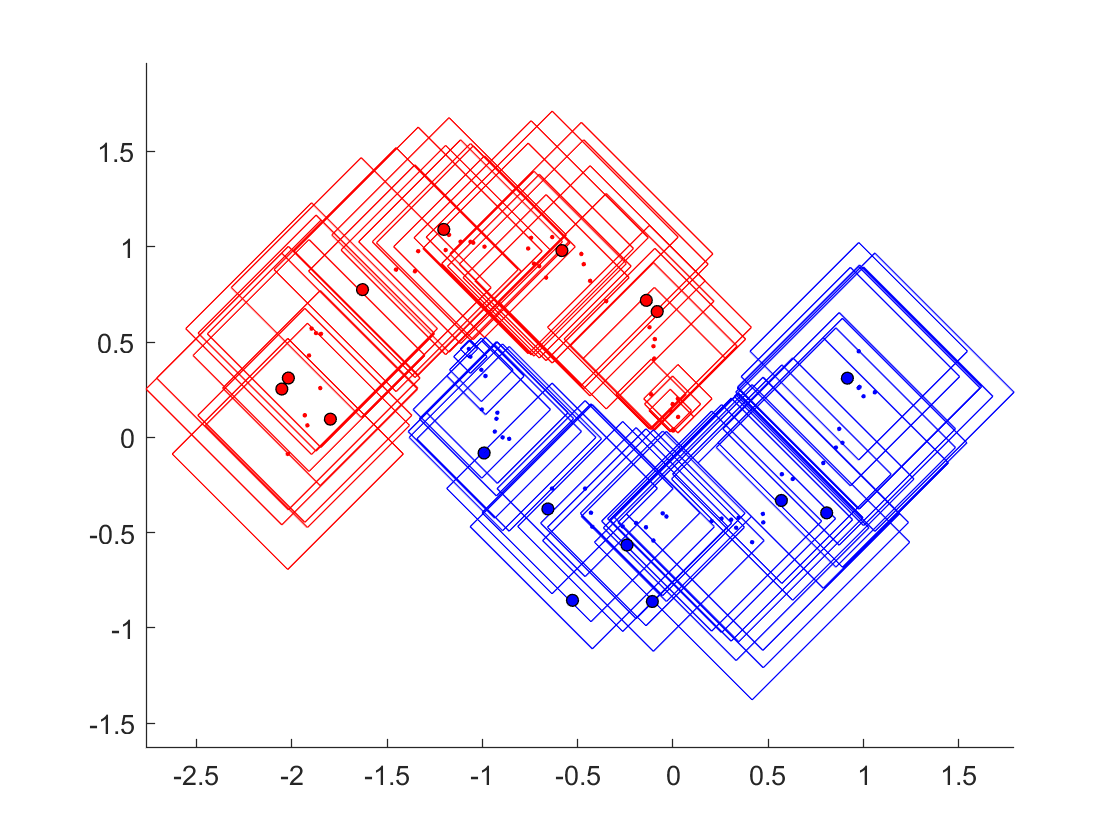}
& \includegraphics[width=0.33\textwidth]{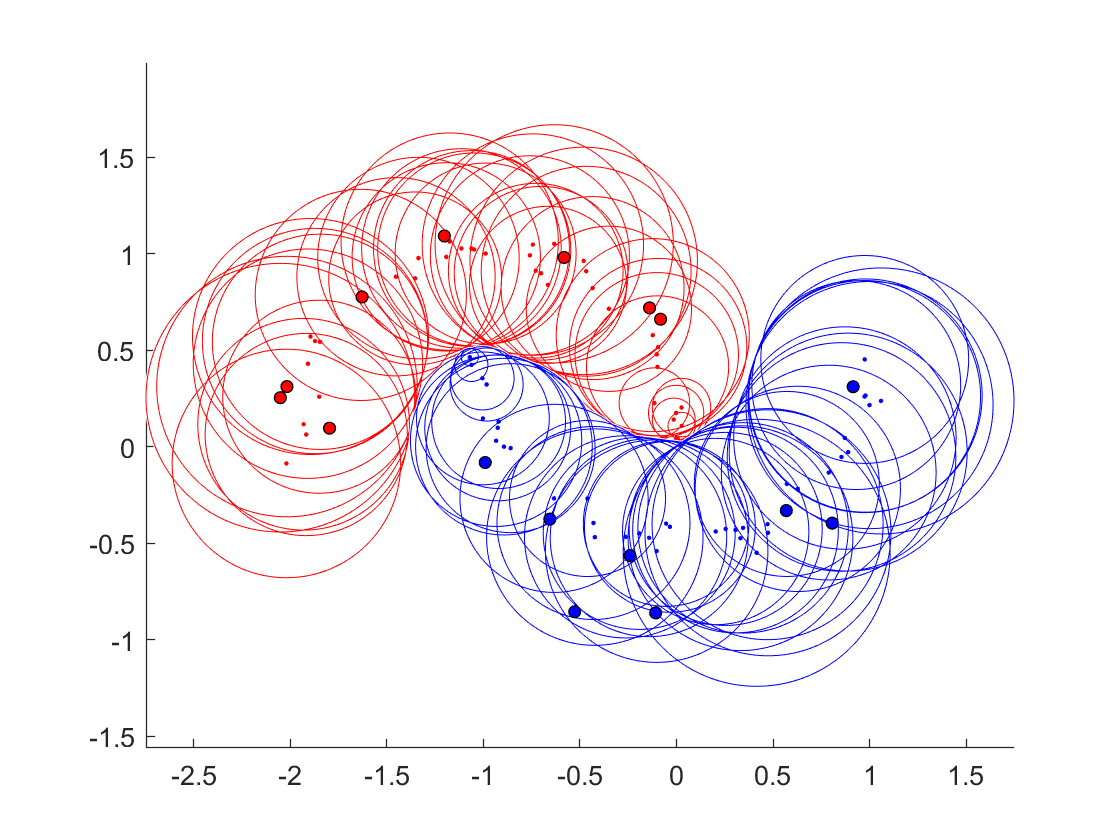}
& \includegraphics[width=0.33\textwidth]{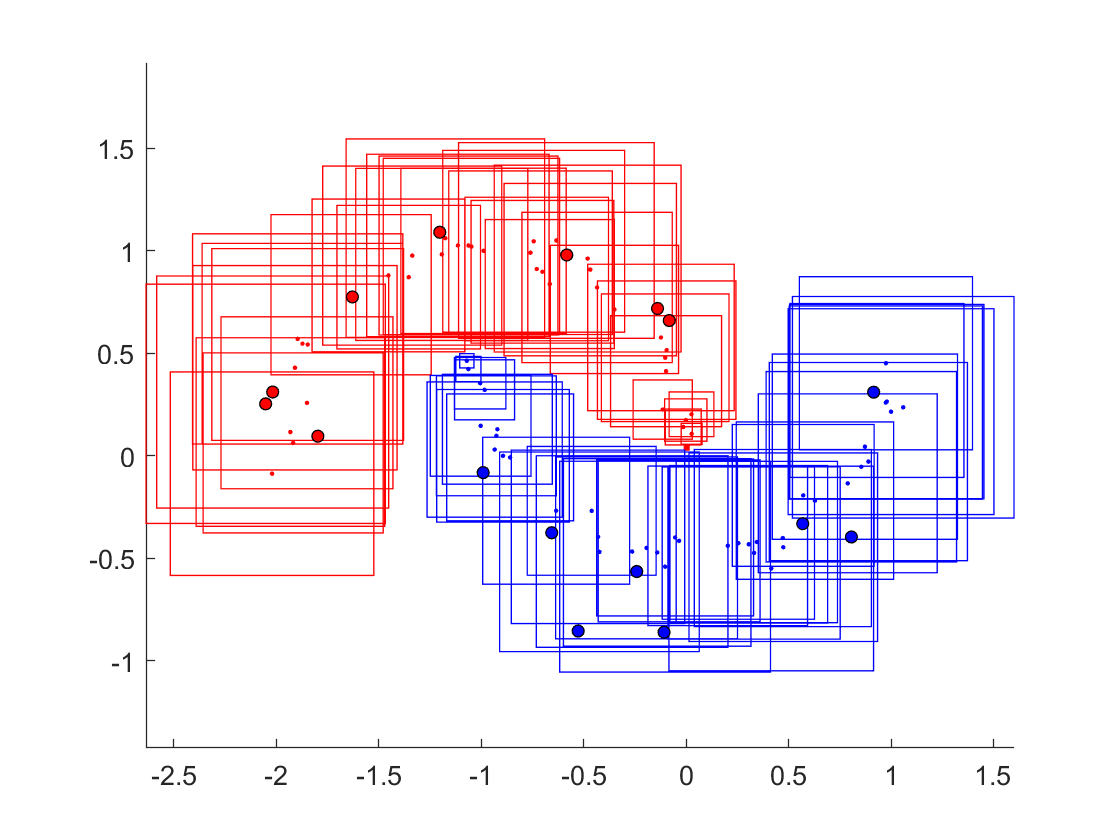}
\end{tabular}
\end{center}
\caption{\textbf{Illustration of the $\ell_q$-minimal adversarial perturbations of a $\ell_2$-NPC for a binary classification problem.} The learned prototypes are shown as the larger red resp. blue dots. For each data point we draw the largest $\ell_1$-(left), $\ell_2$-(middle) and $\ell_\infty$-(right) ball   which is fully classified as the same class. The radii are computed using  Alg. \ref{alg:sketch}. Though there is no specific optimization for multiple-norm robustness, $\ell_2$-NPC possess non-trivial multiple-norm robustness. }
\end{figure*}

\subsection{The minimal adversarial $\ell_q$-perturbation of the $\ell_p$-NPC and lower bounds}
In this section we derive the minimal adversarial $\ell_q$-perturbation for the $\ell_p$-\ours in $\R^d$ where our main interest is $p,q \in \{1,2,\infty\}$. In contrast to the semi-metric case, here we treat the case where the $\ell_q$-metric measuring the size of the adversarial perturbation is different from the $\ell_p$-metric used in the NPC. In this section we use the notation
\[ B_q(x,r)=\{z \in \R^d \,|\, \norm{z-x}_q \leq r\}.\]
Thus we first define
\begin{definition}
The \textbf{minimal adversarial perturbation} $\epsilon_p^q(z)$ of $x \in \X \subset \R^d$ with respect to the $\ell_q$-metric for the $\ell_p$-NPC is defined as:
\begin{align}
   \epsilon^q_p(z)_j=\minop_{r \in \R, x\in\X} & \quad r \nonumber \\
	 \textrm{sbj. to:} & \quad \norm{x-w_i}_p - \norm{x-w_j}_p \geq 0 \nonumber\\
	                   & \quad x \in B_q(z,r) \nonumber
\end{align}

\vv{If $\minop_{i \in I_y} \hspace{-0.25mm}\norm{x-w_i}_p \hspace{-0.5mm}- \hspace{-0.25mm}\minop_{j \in I^c_y} \hspace{-0.25mm}\norm{x-w_j}_p \hspace{-0.25mm}>\hspace{-0.25mm}0$ we set $\epsilon_p^q(z)\hspace{-0.5mm}=\hspace{-0.5mm}0$.}
\end{definition}
The following reformulation of the optimization problem for the computation of the minimal adversarial perturbation $\epsilon^q_p(z)$ allows us to provide a generic and direct way to derive efficiently computable lower bounds on $\epsilon^q_p(z)$.
Note that in the following we always integrate the constraint $x \in \X$ as we will see that this significantly improves the guarantees, e.g. when $\X=[0,1]^d$ in image classification, compared to $\X=\R^d$ as done in \cite{Saralajew2020fast,wang2019evaluating}. 
\begin{theorem}[Exact computation of $\epsilon^q_p(z)$]\label{th:l2-certificate}
Let $z \in \X \subset \R^d$ and denote by $I_y$ the index set of prototypes $(w_j)$ of class $y$ and by $I_y^c$ its
complement (the index set of prototypes not belonging to class $y$). Then define for every $j \in I_y^c$:
\begin{align}\label{eq:rqp}
   r^q_p(z)_j=\minop_{x \in \R^d} & \quad\norm{x-z}_q \\
	 \textrm{sbj. to:} & \quad \norm{x-w_i}_p - \norm{x-w_j}_p \geq 0 \quad \forall \, i \in I_y \nonumber\\
	                   & x \in \X \nonumber
\end{align}
Then $\epsilon^q_p(z)=\minop_{j \in I_y^c} r^q_p(z)_j.$
\end{theorem}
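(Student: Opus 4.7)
The proof plan is a routine two-sided argument: match the optimal value of the adversarial-perturbation problem with the minimum over $j \in I_y^c$ of the per-prototype problems $r^q_p(z)_j$, by showing any feasible $x$ on one side yields a feasible $x$ with no larger cost on the other.

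First I would check the direction $\epsilon^q_p(z) \le \min_{j \in I_y^c} r^q_p(z)_j$. Fix any $j \in I_y^c$ and let $x_j \in \X$ attain $r^q_p(z)_j$. Its constraints give $\|x_j - w_i\|_p \ge \|x_j - w_j\|_p$ for every $i \in I_y$, hence
\[
\minop_{i \in I_y} \|x_j - w_i\|_p \;\ge\; \|x_j - w_j\|_p \;\ge\; \minop_{k \in I_y^c} \|x_j - w_k\|_p,
\]
so the NPC classification condition fails at $x_j$, i.e. $x_j$ is an (at-worst boundary) adversarial example. Since $\|x_j - z\|_q = r^q_p(z)_j$, taking $r = r^q_p(z)_j$ shows $x_j \in B_q(z,r)\cap\X$ is feasible for the $\epsilon^q_p(z)$ program, so $\epsilon^q_p(z) \le r^q_p(z)_j$; minimizing over $j$ yields the bound.

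Next I would establish the reverse direction $\epsilon^q_p(z) \ge \min_{j \in I_y^c} r^q_p(z)_j$. Let $x^*\in\X$ attain $\epsilon^q_p(z)$ (assume attainment; otherwise the same argument runs with infima and $\varepsilon$-optimal points). By the definition of $\epsilon^q_p(z)$ we have
\[
\minop_{i \in I_y} \|x^* - w_i\|_p \;\ge\; \minop_{k \in I_y^c} \|x^* - w_k\|_p.
\]
Pick $j^* \in \argmin_{k \in I_y^c} \|x^* - w_k\|_p$. Then for every $i \in I_y$, $\|x^* - w_i\|_p \ge \|x^* - w_{j^*}\|_p$, which is precisely the constraint set of $r^q_p(z)_{j^*}$; combined with $x^* \in \X$ this makes $x^*$ feasible for that program. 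Consequently $r^q_p(z)_{j^*} \le \|x^*-z\|_q = \epsilon^q_p(z)$, and minimizing the left-hand side over $j \in I_y^c$ only decreases it.

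Combining both inclusions gives equality, which is the statement of the theorem. The only mild care needed is the tie/boundary convention: the statement uses weak inequality ``$\ge 0$'' in the misclassification condition (matching the definition of $\epsilon^q_p(z)$ in the excerpt), so boundary points count as adversarial and the argument goes through without separation into strict and non-strict cases. Attainment of the minima is not essential; should feasible sets fail to be compact (e.g.\ $\X = \R^d$ is unbounded), one replaces the arguments with minimizing sequences and uses that both programs are continuous in $x$, so the infima are matched. I expect no genuine obstacle beyond clean bookkeeping of these conventions.
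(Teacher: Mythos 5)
Your proposal is correct and is essentially the paper's own argument: the paper phrases it as "the set of non-$y$-classified points is $\bigcup_{j\in I_y^c} U^{(p)}_j$ with $U^{(p)}_j$ the feasible set of $r^q_p(z)_j$, and the $\ell_q$-distance to a union is the minimum of the distances," and your two inequalities (feasible points of $r^q_p(z)_j$ are adversarial; an optimal adversarial point is feasible for $r^q_p(z)_{j^*}$ with $j^*$ the nearest wrong prototype) are exactly the unpacking of that set identity. Your explicit handling of attainment and the weak-inequality tie convention is a minor bookkeeping refinement the paper leaves implicit.
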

While the corresponding optimization problems are \vv{often} non-convex, we will see in the following that they are equivalent to convex optimization problems in the case where the $\ell_2$-distance is used in the NPC ($p=2$). Using the formulation of the exact problem as an optimization problem we can now simply derive lower bounds on $\epsilon_p^q(z)$ by relaxing the optimization problem \eqref{eq:rqp}.

We consider for this reason the following optimization problems. For $i \in I_y$ and $j \in I^c_y$ we define:
\begin{align}\label{eq:rhopq}
 \rho^q_p(z)_{i,j}=\minop_{x \in \R^d} & \quad\norm{x-z}_q \\
	 \textrm{sbj. to:} & \quad \norm{x-w_i}_p - \norm{x-w_j}_p \geq 0 \nonumber\\
	                   & x \in \X \nonumber
\end{align}
In Theorem~\ref{thm:rhocomplex} we show that these simpler problems can often be solved efficiently, although the computation of $\epsilon_p^q$ is often intractable, as we show in Theorem~\ref{thm:rcomplex}.

\begin{table}
\begin{center}
\begin{tabular}{c|c| c | c| c|} 
 & & \multicolumn{3}{c|}{$\ell_q$-threat model}\\
 \cline{2-5}
\multirow{4}{*}{ \begin{turn}{90}\hspace{+1mm} $\ell_p$-distance \end{turn}}& & $\ell_1$ & $\ell_2$ & $\ell_\infty$\\ 
\cline{2-5}
 & $\ell_1$ & NP-hard & NP-hard & $O(d \log(d))$ \\ 
 \cline{2-5}
  & $\ell_2$ & $\Theta(d)$ & $\Theta(d)$ & $\Theta(d)$ \\
 \cline{2-5}
 & $\ell_\infty$ & $\Theta(d)$ & $O(d \log(d))$ & $\Theta(d)$ \\
\cline{2-5}
\end{tabular}
\end{center}
\caption{\label{tab:hardness_rho} Computational complexity of $\rho_p^q(z)_{i,j}$.} 
\end{table}
\begin{theorem}\label{thm:rhocomplex}
The computational complexities of optimization problems $\rho_p^q(z)_{i,j}$ for $p,q \in \{1,2,\infty\}$ for $\X=\R^d$ are summarized in Table~\ref{tab:hardness_rho}.
\end{theorem}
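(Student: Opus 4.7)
The plan is to verify the nine entries of Table~\ref{tab:hardness_rho} case-by-case, grouped by the value of $p$. For $p=2$ (middle row), the constraint $\norm{x-w_i}_2^2 \geq \norm{x-w_j}_2^2$ expands algebraically into the single linear inequality $\inner{x,\,w_j-w_i} \geq \tfrac{1}{2}(\norm{w_j}_2^2 - \norm{w_i}_2^2)$, so $\rho_2^q(z)_{i,j}$ reduces to the $\ell_q$-distance from $z$ to a hyperplane, which has the closed form $\max\{0,(b-\inner{z,a})/\norm{a}_{q^*}\}$ with $q^*$ Hölder-conjugate to $q$; both the inner product and the dual-norm evaluation take $\Theta(d)$ time, matching the trivial $\Omega(d)$ lower bound of reading the input.

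For $p=\infty$ (bottom row) I would exploit the disjunctive reformulation
\[
\norm{x-w_i}_\infty \geq \norm{x-w_j}_\infty \;\iff\; \bigvee_{k=1}^{d}\bigvee_{s\in\{\pm 1\}} \bigl[\, s(x_k-(w_i)_k) \geq \norm{x-w_j}_\infty \,\bigr],
\]
each disjunct being a convex inequality. Introducing the parameter $t = s(x_k-(w_i)_k)\geq 0$ reduces the remaining variables $x_l$ for $l\neq k$ to independent box constraints $|x_l-(w_j)_l|\leq t$, so the inner optimum is obtained by clipping $z_l$ into the box, leaving a univariate convex program in $t$. For $q=\infty$ the top-two values of $|z_l-(w_j)_l|$ (computable in $O(d)$) suffice and each $1$-D subproblem has a closed-form optimizer, yielding $\Theta(d)$ overall. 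For $q=2$ the $1$-D objective is piecewise quadratic, and one global $O(d\log d)$ sort plus binary search per $(k,s)$ gives $O(d\log d)$. For $q=1$ the $1$-D objective is piecewise linear with $O(d)$ pieces, and a weighted-median argument together with prefix sums shared across the choices of $k$ yields $\Theta(d)$.

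For $p=1$ (top row) the constraint separates additively as $\sum_l g_l(x_l) \geq 0$, where $g_l(x_l) = |x_l-(w_i)_l| - |x_l-(w_j)_l|$ is a three-piece monotone PL function. When $q=\infty$ the threat ball $B_\infty(z,r)$ is a Cartesian product, so feasibility of radius $r$ is equivalent to $\sum_l \phi_l(r) \geq 0$, where $\phi_l(r) = \max_{|x_l-z_l|\leq r} g_l(x_l)$ is monotone PL in $r$ with at most two breakpoints; sorting the $O(d)$ breakpoints and sweeping locates the minimal feasible $r$ in $O(d\log d)$ time. When $q\in\{1,2\}$ the threat ball couples the coordinates, and I would prove NP-hardness by a Karp reduction from PARTITION: given weights $a_1,\dots,a_n$, the prototypes $w_i,w_j$ and the anchor $z$ are designed so that, with an appropriate budget $R$, the optimum is forced into one of the two saturated regimes of every $g_l$, where $g_l(x_l)\in\{+|a_l|,\,-|a_l|\}$; the feasibility constraint $\sum_l g_l(x_l)\geq 0$ then becomes a balanced-partition question.

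The main obstacle is the NP-hardness reduction for $p=1$: the feasible set has an exponential union structure, but naive relaxations tend to collapse to a tractable separable convex program in the ``interior'' slopes of the $g_l$. To force the combinatorial structure to matter, the construction must make fractional choices of $x_l$ strictly suboptimal. Inflating the gaps $|(w_i)_l-(w_j)_l|$ relative to the allowed $\ell_q$-budget and adding an auxiliary coordinate that ties that budget to the partition target is the standard recipe that should close the reduction for both $q=1$ and $q=2$.
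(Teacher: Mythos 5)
Your treatment of the tractable entries is essentially sound: the $p=2$ row is exactly the paper's argument (the constraint becomes a single linear inequality, so $\rho^q_2$ is the $\ell_q$-distance to a hyperplane computed via the dual norm in $\Theta(d)$), the $(1,\infty)$ entry matches the paper's coordinate-wise decomposition with a sweep over $O(d)$ breakpoints, and your disjunction over the maximizing coordinate $(k,s)$ for the $p=\infty$ row is a workable alternative to the paper's direct characterization of minimizers. Two caveats there: you drop the box constraint $|x_k-(w_j)_k|\le t$ on the pivot coordinate itself (it restricts $t$ to an interval and rules out one sign $s$, and omitting it can yield a strictly smaller, infeasible value), and your claimed $\Theta(d)$ for $(\infty,1)$ is not obviously attainable along your route, since evaluating $\sum_{l\ne k}\max(0,|z_l-(w_j)_l|-t_{\min}(k))$ for $k$-dependent thresholds appears to need sorted prefix sums; the paper instead shows a minimizer perturbs only a single coordinate, which gives $\Theta(d)$ immediately.

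The genuine gap is the NP-hardness for $(p,q)\in\{(1,1),(1,2)\}$, which is the core of the theorem and is only gestured at. Worse, the one concrete mechanism you state --- ``inflating the gaps $|(w_i)_l-(w_j)_l|$ relative to the allowed $\ell_q$-budget'' --- goes in the wrong direction: if the gaps are large compared with the budget, every feasible $x$ stays inside the slope-$\pm 2$ segment of each $g_l$, the constraint degenerates to one linear inequality over the ball, and the problem collapses to the tractable hyperplane case you are trying to avoid. Hardness needs the opposite geometry, which is what the paper's Knapsack gadget supplies: the two prototypes differ in coordinate $l$ only by a tiny amount $p_l/t$, but both sit at distance roughly $\sqrt[q]{w_l}$ from $z=0$, so obtaining any positive contribution from coordinate $l$ costs essentially $w_l$ in the $q$-th power of the budget while yielding gain $p_l/t$; this makes each coordinate's choice effectively binary, two auxiliary coordinates plus the budget $\epsilon=\sqrt[q]{W}$ encode $\inner{w,x}\le W$ and $\inner{p,x}\ge P$, and the converse direction (feasible perturbation $\Rightarrow$ feasible allocation) requires the $t\to\infty$ limit together with integrality of the data to absorb the $o(1)$ fractional slack. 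A PARTITION-based variant (profits equal to weights, gain and budget thresholds both at half the total) could be made to work as a special case of this gadget, but the $\sqrt[q]{\cdot}$ placement needed because costs enter as $q$-th powers, the shrinking-gap parameter, and the rounding argument are exactly the missing content, so as written the reduction is not yet a proof.
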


\begin{theorem}\label{thm:rcomplex}
The computational complexities of optimization problems $r_p^q(z)_j$ in \eqref{eq:rqp} for $p,q \in \{1,2,\infty\}$ and $\X=[0,1]^d$ are summarized in Table~\ref{tab:hardness_eps}.

\begin{table}
\begin{center}
\begin{tabular}{c|c| c | c| c|} 
  & & \multicolumn{3}{c|}{$\ell_q$-threat model }\\
 \cline{2-5}
\multirow{4}{*}{ \begin{turn}{90}\hspace{+1mm} $\ell_p$-distance \end{turn}}  & & $\ell_1$ & $\ell_2$ & $\ell_\infty$\\ 
 \cline{2-5}
& $\ell_1$ & NP-hard & NP-hard & Poly \\ 
 \cline{2-5}
 &$\ell_2$ & Poly & Poly & Poly \\
 \cline{2-5}
 &$\ell_\infty$ & NP-hard & NP-hard &NP-hard \\
\cline{2-5}
\end{tabular}
\end{center}
\caption{ Computational Complexity of $r_p^q(z)$ and $\epsilon_p^q(z)$.} 
\label{tab:hardness_eps}
\end{table}
\end{theorem}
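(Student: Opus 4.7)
The plan is to handle the polynomial and the NP-hard entries of Table~\ref{tab:hardness_eps} in turn, exploiting the relationship between $r_p^q$ and the single-constraint quantity $\rho_p^q$ of Theorem~\ref{thm:rhocomplex}.

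For the polynomial case $p=2$ (all three choices of $q$), I would use the standard observation that $\|x-w_i\|_2 \geq \|x-w_j\|_2$ is equivalent, after squaring, to the linear inequality $2x^T(w_j-w_i) \leq \|w_j\|_2^2 - \|w_i\|_2^2$. The feasible set of $r_2^q(z)_j$ is therefore the intersection of $|I_y|$ halfspaces with the box $[0,1]^d$, i.e.\ a polytope, and minimizing $\|x-z\|_q$ over a polytope reduces to an LP for $q\in\{1,\infty\}$ and to a second-order cone program for $q=2$, both polynomial-time solvable.

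For the polynomial case $(p,q)=(1,\infty)$, I would exploit that $\|x-z\|_\infty \le r$ is a box constraint and that each inequality $\|x-w_i\|_1-\|x-w_j\|_1\ge 0$ is \emph{separable} across coordinates: it has the form $\sum_k h_{i,k}(x_k)\ge 0$, where $h_{i,k}(x_k)=|x_k-(w_i)_k|-|x_k-(w_j)_k|$ is a monotone piecewise-linear function of one variable with at most two breakpoints. One can then binary-search (or parametrically sweep) over $r$ and, for each fixed $r$, solve the feasibility problem as an LP with $O(d\,|I_y|)$ variables by introducing per-coordinate auxiliary variables encoding the three linear pieces of each $h_{i,k}$ on the known interval $[\max(0,z_k-r),\min(1,z_k+r)]$.

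For the NP-hard entries on the $p=1$ row, I would obtain NP-hardness \emph{for free} from Theorem~\ref{thm:rhocomplex}: $\rho_1^q(z)_{i,j}$ is already NP-hard for $q\in\{1,2\}$, and it is the special case of $r_1^q(z)_j$ with $I_y=\{i\}$, so hardness of $\rho$ transfers directly to $r$. For the $p=\infty$ row this trick fails, because Theorem~\ref{thm:rhocomplex} says the single-constraint version is polynomial; NP-hardness must therefore come from the combinatorial interaction of multiple constraints. I would reduce from PARTITION (or SUBSET-SUM): the constraint $\|x-w_i\|_\infty \geq \|x-w_j\|_\infty$ is intrinsically disjunctive, requiring that some coordinate witness the $\ell_\infty$ bound, so by placing the prototypes at strategically chosen corners of a scaled cube and using $[0,1]^d$ to force nearly-binary coordinates for $x$, simultaneous satisfaction of all $|I_y|$ such constraints can be made equivalent to selecting a subset of indices whose weights sum to the PARTITION target.

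The main obstacle I anticipate is designing the NP-hardness reduction for the $p=\infty$ row so that the combinatorial flexibility created by the disjunctive $\ell_\infty$ constraints aligns precisely with PARTITION, while keeping all prototypes, the query point, and the feasible $x$ inside $[0,1]^d$ and ensuring that essentially the same construction works uniformly for all three $q$-objectives.
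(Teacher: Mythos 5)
Your handling of the $p=2$ row and of the NP-hard entries in the $p=1$ row coincides with the paper's: squaring turns each constraint into a halfspace, so $r_2^q(z)_j$ is a convex program over a polytope, and hardness of $r_1^q(z)_j$ for $q\in\{1,2\}$ is inherited from the single-constraint problem $\rho_1^q(z)_{i,j}$ by taking $|I_y|=1$. The two remaining cells, however, have genuine gaps. For $(p,q)=(1,\infty)$, your fixed-$r$ feasibility check cannot be formulated as an LP: the per-coordinate functions $h_{i,k}(x_k)=|x_k-(w_i)_k|-|x_k-(w_j)_k|$ are monotone but of the shape constant--linear--constant, hence neither convex nor concave, and the set $\{x:\|x-w_i\|_1\ge\|x-w_j\|_1\}$ is in general non-convex; introducing auxiliary variables for the ``three linear pieces'' inside a $\ge$ constraint only yields a relaxation, so the LP could declare feasibility where none exists. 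The missing idea, which the paper uses, is that moving any coordinate $x_k$ from $z_k$ toward $(w_j)_k$ (stopping at $(w_j)_k$ or at the boundary of the $\ell_\infty$-ball, resp.\ the box) never decreases $h_{i,k}$, for every $i$ \emph{simultaneously}; hence for each radius there is one canonical candidate point, which gives $r_1^\infty(z)_j=\max_{i\in I_y}\rho_1^\infty(z)_{i,j}$ and an $O(|I_y|\,d\log d)$ computation per $j$ --- your binary-search skeleton is fine, but the feasibility test must be this greedy check, not an LP.

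For the $p=\infty$ row you only sketch a PARTITION-based reduction, and this is the heart of the result. The paper instead reduces 3-SAT to the \emph{feasibility} problem of $r_\infty^q(z)_j$, which gives hardness for all $q$ at once: set $w_j=(0,\dots,0,3)$ and, for each clause, one prototype $w_i$ with entry $-1$ for a positive literal, $2$ for a negative literal, and $0$ elsewhere. Then every $x\in[0,1]^{v+1}$ satisfies $\|x-w_j\|_\infty\ge 2$ and $\|x-w_i\|_\infty\le 2$, so feasibility forces $\|x-w_i\|_\infty=2$, i.e.\ some literal's coordinate must sit at the corresponding binary value, and simultaneous feasibility of all constraints is exactly satisfiability of the formula. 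This works because each constraint is intrinsically a disjunction (``some coordinate is extremal''), which matches a clause gadget directly; a PARTITION or SUBSET-SUM target is a global \emph{sum} condition and does not map naturally onto max-type constraints, so your sketch would require substantial additional gadgetry that you have not provided, and keeping it uniform over the three objectives $q$ is exactly what the feasibility-based 3-SAT argument delivers for free.
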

 Apart from the known $\ell_2$-case (see \cite{wang2019evaluating}) we show that also $\ell_1$-NPC can be certified efficiently for the $\ell_\infty$-threat model. Because of this theorem it is even more important that at least for the $\ell_\infty$-NPCs efficient lower bounds are available for all threat models in $q=\{1,2,\infty\}$. We note that the optimization problem for $r^q_2(z)_j$ in \eqref{eq:rqp} is equivalent to a quadratic program for $q=2$ and to a linear programs for $q \in \{1,\infty\}$  for both with and without box constraints. 

The following lemma shows that \eqref{eq:rhopq} can be used to get a lower bound on the minimal adversarial perturbation, and subsequently we show that it improves on the previous bound given in Theorem \ref{thm:trivlb} which has been derived  by \cite{Saralajew2020fast}. In particular, this bound can be tight and we show in Table \ref{tab:time} in Section \ref{sec:exp} that this happens frequently in practice and thus allows to avoid the significantly more complex problems in \eqref{eq:rqp}.
    \begin{lemma}\label{lem:minmaxrho}
    It holds 
    \[ \epsilon^q_p(z) \geq \minop_{j \in I^y_c}\maxop_{i \in I_y} \rho^q_p(z)_{i,j}.\]
    Moreover, let $(j^*,i^*)$ be the prototype pair in $I^y_c \times I_y$ which realizes the lower bound and denote by $x^*$ the minimizer of $\rho^q_p(z)_{i^*,j^*}$. Then if $x^*$ fulfills
    \[ \norm{x^*-w_i}_p - \norm{x^*-w_{j^*}}_p \geq 0 \quad \forall i \in I_y,\]
    then $\epsilon^p_q(z)=\minop_{j \in I^y_c}\maxop_{i \in I_y} \rho^q_p(z)_{i,j}.$
    \end{lemma}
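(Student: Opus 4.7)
The plan is to prove both parts by comparing the feasible sets of the optimization problems $r^q_p(z)_j$ defined in \eqref{eq:rqp} and $\rho^q_p(z)_{i,j}$ defined in \eqref{eq:rhopq}. Both problems minimize the same objective $\norm{x-z}_q$ over $x \in \X$, but $r^q_p(z)_j$ enforces all margin constraints $\norm{x-w_i}_p - \norm{x-w_j}_p \geq 0$ for $i \in I_y$ simultaneously, while $\rho^q_p(z)_{i,j}$ enforces only one such constraint. Hence the feasible set of $r^q_p(z)_j$ is contained in that of $\rho^q_p(z)_{i,j}$ for each $i \in I_y$, giving the pointwise inequality $r^q_p(z)_j \geq \rho^q_p(z)_{i,j}$.

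First I would take the maximum over $i \in I_y$ on the right-hand side to get $r^q_p(z)_j \geq \max_{i \in I_y} \rho^q_p(z)_{i,j}$, and then take the minimum over $j \in I_y^c$ on both sides. Combined with the identity $\epsilon^q_p(z) = \min_{j \in I_y^c} r^q_p(z)_j$ provided by Theorem~\ref{th:l2-certificate}, this establishes the first claim
\[ \epsilon^q_p(z) \geq \minop_{j \in I_y^c}\maxop_{i \in I_y} \rho^q_p(z)_{i,j}. \]

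For the second part, I would exploit that tightness in the above chain of inequalities can be verified a posteriori from the minimizer of the relaxed problem. By hypothesis, the optimizer $x^*$ of $\rho^q_p(z)_{i^*,j^*}$ satisfies $\norm{x^*-w_i}_p - \norm{x^*-w_{j^*}}_p \geq 0$ for every $i \in I_y$, so it is feasible for $r^q_p(z)_{j^*}$. This yields $r^q_p(z)_{j^*} \leq \norm{x^*-z}_q = \rho^q_p(z)_{i^*,j^*}$. Since $(i^*,j^*)$ realizes the lower bound, $\rho^q_p(z)_{i^*,j^*} = \max_{i \in I_y}\rho^q_p(z)_{i,j^*}$, which together with the inequality $r^q_p(z)_{j^*} \geq \max_{i \in I_y}\rho^q_p(z)_{i,j^*}$ forces equality at $j=j^*$. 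Taking the minimum over $j \in I_y^c$ then gives $\epsilon^q_p(z) \leq r^q_p(z)_{j^*} = \min_{j \in I_y^c}\max_{i \in I_y} \rho^q_p(z)_{i,j}$, and combining with the first part yields the desired equality.

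I do not anticipate a serious obstacle: the whole argument is a standard constraint-relaxation / sufficient-condition-for-tightness manoeuvre. The only point requiring a little care is making sure the minimizer $x^*$ actually exists (so that the feasibility check makes sense) and that set membership $x^* \in \X$ carries over automatically from $\rho^q_p(z)_{i^*,j^*}$ to $r^q_p(z)_{j^*}$, which is immediate since both problems share the constraint $x \in \X$.
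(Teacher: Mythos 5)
Your proposal is correct and follows essentially the same argument as the paper: the first claim via the observation that each $\rho^q_p(z)_{i,j}$ is a relaxation of $r^q_p(z)_j$ (so $r^q_p(z)_j \geq \maxop_{i \in I_y}\rho^q_p(z)_{i,j}$, then minimize over $j$ and use Theorem~\ref{th:l2-certificate}), and the second claim by noting that feasibility of $x^*$ for $r^q_p(z)_{j^*}$ forces the sandwich $\rho^q_p(z)_{i^*,j^*}=r^q_p(z)_{j^*}\geq \epsilon^q_p(z)\geq \rho^q_p(z)_{i^*,j^*}$. No gaps.
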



\begin{theorem}\label{th:betterbound}
   The lower bound on $\epsilon_p^p(z)$ of Lemma~\ref{lem:minmaxrho} is at least as good as the one of Theorem \ref{thm:trivlb}. That is,
   \begin{align*}
       \minop_{j \in I^y_c}\maxop_{i \in I_y} & \, \rho^p_p(z)_{i,j} \geq \minop_{j \in I^y_c}\rho^p_p(z)_{i^*,j} \\ \geq  &
\max\left\{0, \frac{\minop_{j \in I_y^c} \norm{z-w_j}_p-\minop_{i \in I_y}\norm{z-w_i}_p}{2}\right\},
   \end{align*}
   where $i^* \in \arg\minop_{i \in I_y} \norm{z-w_i}_p$. 
\end{theorem}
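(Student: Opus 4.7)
The first inequality in the chain is immediate: for any fixed $j \in I_y^c$, taking the maximum over $i \in I_y$ of $\rho^p_p(z)_{i,j}$ is at least the value at the particular choice $i = i^* \in \arg\min_{i \in I_y}\norm{z-w_i}_p$, and this dominance is preserved under $\min_{j \in I_y^c}$. The substance of the theorem is the second inequality, which I would prove by a triangle-inequality argument applied to the minimizer of the relaxed problem $\rho^p_p(z)_{i^*,j}$.

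The plan for the second inequality is as follows. Fix $j \in I_y^c$, and let $x$ be feasible for $\rho^p_p(z)_{i^*,j}$, i.e.\ $x \in \X$ with $\norm{x-w_{i^*}}_p \geq \norm{x-w_j}_p$. By the triangle inequality applied twice,
\[ \norm{z-w_j}_p \leq \norm{z-x}_p + \norm{x-w_j}_p \leq \norm{z-x}_p + \norm{x-w_{i^*}}_p \leq 2\norm{z-x}_p + \norm{z-w_{i^*}}_p.\]
Rearranging yields $\norm{x-z}_p \geq \tfrac{1}{2}\bigl(\norm{z-w_j}_p - \norm{z-w_{i^*}}_p\bigr)$, and together with non-negativity of norms this gives
\[ \rho^p_p(z)_{i^*,j} \geq \max\!\left\{0,\ \tfrac{1}{2}\bigl(\norm{z-w_j}_p - \norm{z-w_{i^*}}_p\bigr)\right\}.\]

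Finally I would take the minimum over $j \in I_y^c$. Since $\norm{z-w_{i^*}}_p$ does not depend on $j$, pulling the minimization inside and using that $\max\{0,\cdot\}$ is monotone in its argument produces
\[ \minop_{j \in I_y^c} \rho^p_p(z)_{i^*,j} \geq \max\!\left\{0,\ \tfrac{1}{2}\bigl(\minop_{j \in I_y^c}\norm{z-w_j}_p - \norm{z-w_{i^*}}_p\bigr)\right\},\]
and substituting the definition $\norm{z-w_{i^*}}_p = \min_{i \in I_y}\norm{z-w_i}_p$ gives exactly the bound of Theorem~\ref{thm:trivlb}.

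I do not expect a genuine obstacle here: the only property used is the triangle inequality (the same axiom powering Theorem~\ref{thm:trivlb}), and the proof is essentially a careful bookkeeping step. The one subtlety worth double-checking is that the argument does not require $\X$ to be all of $\R^d$ or the constraint set to be non-empty in a trivial way; since $w_{i^*} \in \X$ itself satisfies $\norm{w_{i^*}-w_{i^*}}_p = 0 \leq \norm{w_{i^*}-w_j}_p$, the feasible set of $\rho^p_p(z)_{i^*,j}$ is non-empty, so the infimum is attained and the triangle inequality chain applies to any feasible $x$, including the optimum.
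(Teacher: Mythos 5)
Your proof is correct and takes essentially the same route as the paper's: your triangle-inequality estimate $\rho^p_p(z)_{i^*,j}\ge\tfrac12\bigl(\norm{z-w_j}_p-\norm{z-w_{i^*}}_p\bigr)$ is exactly the paper's auxiliary Lemma~\ref{lem:trivos} (proved there by summing the same two triangle inequalities), and the subsequent steps of evaluating at $i=i^*$ and pulling the minimum over $j\in I_y^c$ inside the bound coincide with the paper's argument. Only your final aside contains a harmless slip: the feasibility witness for $\rho^p_p(z)_{i^*,j}$ should be $x=w_j$ (giving $\norm{w_j-w_{i^*}}_p\ge 0=\norm{w_j-w_j}_p$), not $x=w_{i^*}$, but this is immaterial since your lower bound holds for every feasible point and therefore for the infimum, with neither nonemptiness nor attainment actually needed.
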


In order to be able to use these lower bounds for certified training of our \ours, their efficient computation is of high importance which we discuss next. 




For better intuition we discuss some cases in more detail. The $\ell_2$-NPC have a nice geometric descriptions as the set 
\begin{align*}
  &  \{z |\norm{z-w_i}_2=\norm{z-w_j}_2\}\\
= &\{z | \inner{w_j-w_i,z} + \frac{\norm{w_i}^2_2-\norm{w_j}^2_2}{2}=0\}
\end{align*}is a hyperplane. 
Thus the computation of $\rho^q_2(z)_{i,j}$ for $\X=\R^d$ corresponds to the computation of the $\ell_q$-distance of a point to a hyperplane:
\[ \rho^q_2(z)_{i,j}=\frac{\norm{z-w_j}_2^2 - \norm{z-w_i}^2_2}{2\norm{w_i-w_j}_{q^*}},\]
where $q^*$ denotes the dual norm of $q$. This has also been derived in \cite{wang2019evaluating}. 
As illustration  how the constraints $\X=[0,1]^d$, e.g. in image classification, improve the certificates, we show in Figure \ref{fig:box} the ball which can be certified in $\R^d$ resp. $[0,1]^d$. 

\begin{figure}[t]
\begin{center}
\includegraphics[width=0.95\columnwidth]{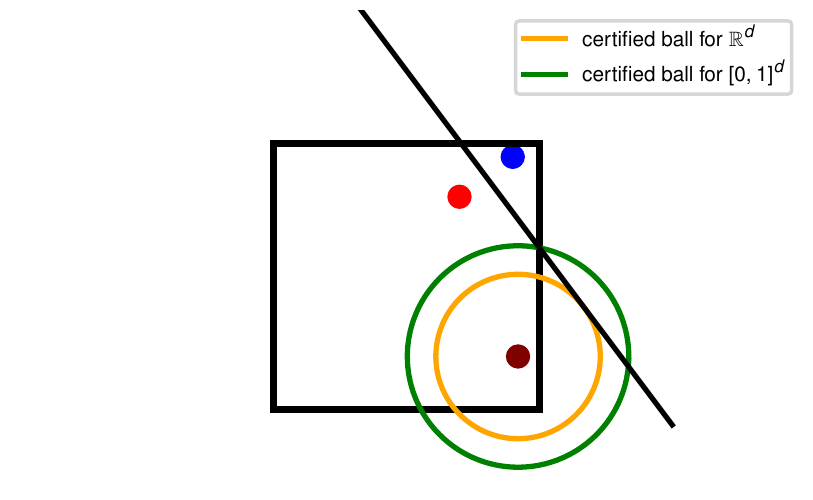}
\caption{\label{fig:box}Illustration for $\ell_2$-NPC for two prototypes (red and blue): when taking into account that the data lies in $[0,1]^d$ we can certify a larger ball than in $\R^d$.}
\end{center}
\end{figure}

\subsection{How to do the certification efficiently}
Table \ref{tab:hardness_rho} shows that $\rho^q_p(z)_{i,j}$ can be computed efficiently or even given in closed form except for the two cases when $(p,q) \in \{(1,1), (1,2)\}$. However, that would still mean that the lower bound of Lemma \ref{lem:minmaxrho}
\[ \epsilon^q_p(z) \geq \minop_{i \in I_y}\maxop_{j \in I^c_y} \rho^q_p(z)_{i,j},\]
would require us to solve naively $|I_y||I_y^c|$ such problems. Seemingly, the bound in Theorem \ref{thm:trivlb} is much cheaper as it requires only $(|I_y|+|I_y^c|)$  operations even though one has to note that the bound only exists for the case when $p=q$.

\textbf{i) A lower bound:} Theorem \ref{th:betterbound} shows that
when fixing $i^*=\argmin_{i \in I_y} \norm{z-w_i}$ and then computing
\[ \min_{j \in I^y_c} \rho^{q}_p(z)_{i^*,j},\]
 yields by Lemma \ref{lem:minmaxrho} a lower bound on $\epsilon^q_p(z)$. By Theorem \ref{th:betterbound} this lower bound is for the case $p=q$ still better than the one of Theorem \ref{thm:trivlb} while having the same complexity of $|I_y|+|I_y^c|$ operations.
Obviously, when integrating box constraints, that is $\X=[0,1]^d$, the gap can only become larger between the two bounds. 

\textbf{ii) Using simpler lower bounds:} When certifying bounds for $\X=[0,1]^d$ we first compute the lower bounds for $\X=\R^d$ as they are often available in closed form and are definitely lower bounds for the more restricted case $\X=[0,1]^d$. By fixing again $i^*$ we can then use $s_j:=\rho^q_p(z)_{i^*,j}$ and define the minimum and minimizer as $(\lambda,j^*)=\min_{j \in I_y^c} \rho^{q}_p(z)_{i^*,j}$. Now, let us denote by $\kappa^q_p(z)_{i^*,j}$ the corresponding quantity when using $\X=[0,1]^d$ instead of $\X=\R^d$. Then we only need to compute $\kappa^q_p(z)_{i^*,j}$ if $s_j<\kappa^q_p(z)_{i^*,j^*}$, which is typically satisfied for very few instances, so most computations are pruned.

\textbf{iii) Dual problems:} as in \cite{wang2019evaluating} we use the dual problems when computing $r^q_2(z)_j$. This has three advantages. First, we always get a lower bound using weak duality, second, we stop solving $r^q_p(z)_j$ when the dual value is higher than our currently smallest upper bound and third; empirically only few constraints of the problems become active; thus, the solutions are dual-sparse.

\textbf{Final Certification:} in Algorithm \ref{alg:sketch} we sketch the certification process. It does not include all details (see above) which we use for speeding up the computation of lower bounds as well as the exact minimal adversarial perturbation.

\begin{algorithm}[tb]
   \caption{Sketch of certification algorithm for correctly classified point $z$}
   \label{alg:sketch}
\begin{algorithmic}
   \STATE \textbf{// Computation of $\lambda$ as lower bound on $\epsilon^q_p(z)$}\\
  \STATE $i^*=\argmin_{i \in I_y}\norm{z-w_i}_p$
   \STATE $s_j = \rho^q_p(z)_{i^*,j}$, $j \in I^c_y$ ($s_j$ lower bounds $r^q_p(z)_j$)
  \STATE $(\lambda,j^*)=\min_{j \in I_y^c}\rho^q_p(z)_{i^*,j}$
  \IF{minimizer $x^*$ of $\rho^q_p(z)_{i^*,j^*}$ is feasible for $r^q_p(z)_{j^*}$}
     \STATE $\epsilon^q_p(z)=\lambda$
     and return
  \ELSE
     \STATE $\lambda$ is lower bound on $\epsilon^p_q(z)$
  \ENDIF
  \STATE{// \textbf{Computation of $\epsilon^q_p(z)$ ( $p=2$ or $(p,q)=(1,\infty)$)}}
  \STATE $\mu=r^q_p(z)_{j^{*}}$ // (it holds $\mu\geq \epsilon^q_p(z)$)
  \FOR{$j=1$  {\bfseries to} $|I_y^c|$}
  \IF{$s_j < \mu$}
     \STATE compute $r^q_p(z)_j$
     \IF{$r^q_p(z)_j< \mu$}
      \STATE $\mu=r^q_p(z)_j$
     \ENDIF
  \ENDIF
  \STATE $\epsilon^q_p(z)=\mu$
  \ENDFOR
\end{algorithmic}
\end{algorithm}

\section{Perceptual Metric}\label{sec:perceptual}
The hypothesis underlying the goal of adversarial robustness is that images which have the same semantic content, should be classified the same
(with the exception at the true decision boundary). However, this would require a human oracle which judges if the semantic
content is similar. A proxy is the typical $\ell_p$-threat model, where for suitable chosen radius $\epsilon_p$ one expects that for a given
image $x$ also 
$B_p(x,\epsilon_p)$
 should be classified
the same as for humans the resulting images are (semantically) indistinguishable from the original image. However, it is well known that pixel-based
$\ell_p$-distances are not a good measure of image similarity. A huge literature in computer vision  discusses the construction of metrics which better correspond to human perception of similarity of images e.g. the SSIM metric of \cite{wang2004image}. More recently, neural perceptual metrics, such as the LPIPS distance,  have been proposed in \cite{zhang2018unreasonable}. The LPIPS distance is based on a feature mapping of a fixed neural network and has been shown to correlate better with human perception \cite{zhang2018unreasonable,laidlaw2021perceptual}. In \cite{laidlaw2021perceptual} it has been used as threat model in adversarial training. Moreover, \cite{kireev2022effectiveness} have shown that the LPIPS distance better correlates with the severity level of common corruptions than the $\ell_2$-distance.
Moreover, $\ell_p$-distance based NPC are not competitive for CIFAR10. These two aspects motivate us to investigate the perceptual metric-based \ours as well as novel techniques for the certification in the LPIPS-threat model.

\paragraph{The perceptual metric:} Given the output $g^{(l)}(x) \in \R^{H_l \times W_l \times C_l}$ of the $l$-th layer of a fixed neural network (we use Alexnet as suggested by \cite{zhang2018unreasonable}) of height $H_l$ and width $W_l$ and channels $C_l$, we define the normalized output of a layer as $\hat{g}_{h,w}^{(l)}(x)=\frac{g_{h,w}^{(l)}(x)}{\norm{g_{h,w}^{(l)}(x)}_2}$. The LPIPS distance $d$ is then defined in \cite{zhang2018unreasonable} as
\[ d^2(x,y) = \sum_{l \in I_L} \frac{1}{H_l W_l} \sum_{h,w} \norm{w_l \odot  \left(\hat{g}^{(l)}_{h,w}(x)-\hat{g}^{(l)}_{h,w}(y)\right)}_2^2,\]
where the weights $w_l$ are learned using human perception data and $I_L$ is the index set of layers used for the metric. We follow \cite{laidlaw2021perceptual} and use the unweighted \vv{(i.e., weights perform an identity mapping)} version in order to be able to directly compare to them. However, it would be easy to adapt our approach for the weighted version. We define the embedding, $\phi:[0,1]^d \rightarrow \R^D$
\begin{equation} x \mapsto \phi(x)=\left(\frac{\hat{g}^{(l)}}{\sqrt{H_l W_l}}\right)_{l \in I_L},
\end{equation}
so that the unweighted LPIPS distance can simply be written as a standard Euclidean distance $d(x,y)=\norm{\phi(x)-\phi(y)}_2$ in the embedding space.

The mapped image space $\phi(I)$ of all natural images $I$ is a subset of $\phi([0,1]^d)$, which can be seen as an at most $d$-dimensional continuous ``submanifold'' of the embedding space $\R^D$. Thus for all points $z  \in \R^D \backslash \phi([0,1]^d)$ there exists no pre-image in $[0,1]^d$. However, the Euclidean distance between every mapped images $x,y \in I$ corresponds to the perceptual distance between them. Thus we train our \ours in the embedding space $\R^D$ and certify it with respect to the Euclidean distance which in turn yields guarantees with respect to the LPIPS distance.

\begin{figure}[ht]
\begin{center}
\includegraphics[width=0.95\columnwidth]{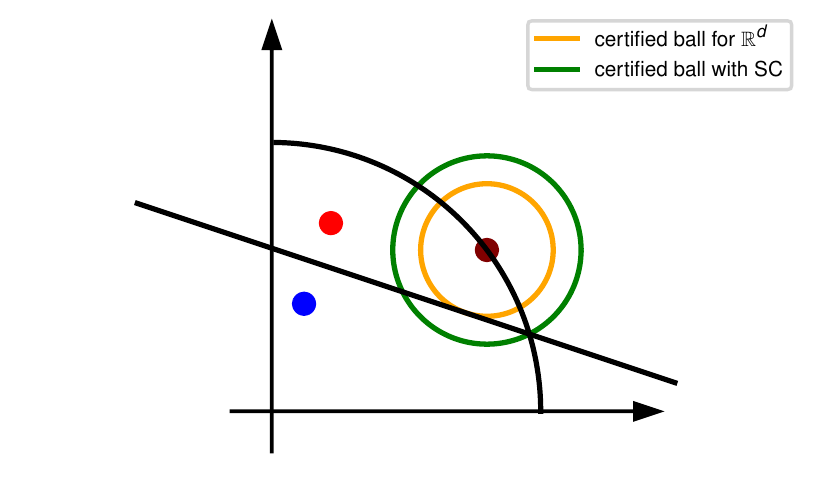}
\caption{\label{fig:SC}The embedded data $\phi(x)$ lies on the intersection of the positive orthant and the sphere (shown in black). In the embedding space the $\ell_2$-metric corresponds to the perceptual metric. Taking these non-negative spherical constraints (SC) into account we can certify a much larger ball than using only the standard certification in $\R^d$.}
\end{center}
\end{figure}

\subsection{Certification in the Perceptual threat model}
Up to our knowledge this is the first paper showing results for certification with respect to this threat model aligned with human vision. We can use all techniques we have discussed in Section \ref{sec:proto} as we are working with a Euclidean distance in $\R^D$. However, we have more knowledge about $\phi([0,1]^d)$ as the output of each layer is normalized so that $\phi(x)$ lies on a product of spheres with radius $r_l=\frac{1}{\sqrt{H_l W_l}}$ as
\begin{equation}\label{eq:lpips}
\norm{\phi^{(l)}_{h,w}(x)}_2 =\norm{\frac{\hat{g}_{h,w}^{(l)}}{\sqrt{H_l W_l}}}_2=\frac{1}{\sqrt{H_l W_l}}:=r_l, \quad 
\end{equation}
for any $l\in I_L, h\in I_H, w \in I_W$. Additionally, we know due to the structure of Alexnet that $\phi_l(x)$ is non-negative for all layers, see Figure \ref{fig:SC} for an illustration. While we can integrate some of the properties of the mapping $\phi$ into the certification, it is computationally intractable to use as constraint $x \in \phi([0,1]^d)$. Thus our certification works on an overapproximation of $\phi([0,1]^d)$ and thus yields lower bounds on the minimal adversarial perceptual distance.

Basically, we can write our constraints 
in $\R^D$ as
\vv{\begin{align}
    \X =& \X_1 \times \cdots \times \X_L \\
    &\X_l = \left(\frac{1}{\sqrt{H_lW_l}}S^{c_l} \cap [0,\infty)^{c_l} \right)^{H_lW_l},\; l=1,\ldots,L, \nonumber
\end{align}}
where $c_l$ is the number of channels in layer $l$ of the output of the layer $l$ and $D=\sum_{l=1}^L H_lW_lc_l$. \vv{We use upper indices (e.g., $x^{(h,w,l)}$) to denote slice of vector $x$ which corresponds to vector of channels at position $h,w$ in layer $l$. The constants $r_l$ for $1 \leq l \leq L$ were defined in~\eqref{eq:lpips}.}

As we use $\ell_2$-NPC we have to compute:
\begin{align}\label{eq:r-lpips}
   \rho(z)_{i,j}=\minop_{x \in \R^D} & \quad\norm{x-z}_2\\
	 \textrm{sbj. to:} & \quad \inner{x,w_j-w_i} + \frac{\norm{w_i}^2_2-\norm{w_j}^2_2}{2} \geq 0 \nonumber\\
	                   & \quad \norm{x^{(h,w,l)}}^2_2=r_l^2, \;  l=1,\ldots,L\nonumber\\
	                   &~~~~~~~~~~~~~~~~~~~~~~~~~~~~~~~~~~~  h = 1,\ldots,H_l\nonumber\\
	                   &~~~~~~~~~~~~~~~~~~~~~~~~~~~~~~~~~~~  w = 1,\ldots, W_l\nonumber\\
	                   & \quad x_d \geq 0,\; ~~~~~~~~~~~~~~~~~d=1,\ldots,D. \nonumber
\end{align}
Despite this problem is non-convex due to the quadratic equality constraints we can derive a convex dual problem (we derive it for an equivalent problem) which is sufficient to provide us with lower bounds using weak duality.
\begin{proposition}\label{pro:lb-lpips}
\vv{Define $v = w_j-w_i$ and $b = \frac{\norm{w_i}^2_2-\norm{w_j}^2_2}{2}$.}
A lower bound on the optimal value of the optimization problem \eqref{eq:r-lpips} is given by
\[ \sqrt{2L  + 2\left(\maxop_{\lambda \geq 0} - \sum_{h,w,l} \norm{\left(z^{(h,w,l)} - \lambda v^{(h,w,l)}\right)^+}_2r_l + \lambda b\right)} \]
which can be efficiently computed using bisection.
The summation $\sum_{h,w,l}$ is a shortcut for $\sum_{1 \leq l \leq L}\sum_{1 \leq h \leq H_l}\sum_{1 \leq w \leq W_l}.$
\end{proposition}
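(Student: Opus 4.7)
The plan is to first observe that the non-convex spherical equality constraints let us replace the quadratic objective by a linear one, and then to dualize only the single adversarial inequality so that the remaining (block-separable) constraints admit a closed-form inner maximization.

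First, I note that for any $x$ satisfying the spherical constraints, $\norm{x}_2^2 = \sum_{l=1}^L \sum_{h,w} r_l^2 = \sum_l H_l W_l r_l^2 = L$, and the same identity holds for $z \in \X$ (since $z$ itself is an embedding of a natural image). Therefore $\norm{x-z}_2^2 = 2L - 2\inner{x,z}$, and the feasibility-restricted minimization of $\norm{x-z}_2$ in~\eqref{eq:r-lpips} is equivalent to computing $p^* \bydef \maxop\, \inner{x,z}$ over the same constraint set. This trades an inconvenient quadratic objective for a linear one, at the price of keeping the non-convexity inside the constraints.

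Next, I introduce a single Lagrange multiplier $\lambda \geq 0$ for the constraint $\inner{x,v}+b\geq 0$ and keep the spherical and non-negativity constraints inside the inner maximization. Weak duality yields $p^* \leq g(\lambda)$, where
\[ g(\lambda) = \maxop_{\norm{x^{(h,w,l)}}_2 = r_l,\; x\geq 0}\, \inner{x,\,z+\lambda v} + \lambda b. \]
Because the inner feasible set is block-separable over the triples $(h,w,l)$, this decouples into independent problems $\maxop_{\norm{y}_2 = r_l,\, y\geq 0}\inner{y,c}$ with $c = (z+\lambda v)^{(h,w,l)}$. A short Cauchy--Schwarz argument (zero out coordinates where $c_i<0$, then align the remaining entries of $y$ with $c^+$) shows each block's optimum equals $r_l\norm{c^+}_2$. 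Summing over blocks produces a closed form for $g(\lambda)$, and combining $p^* \leq \minop_{\lambda\geq 0}g(\lambda)$ with $\norm{x-z}_2^2 = 2L - 2\inner{x,z}$ gives the stated square-root lower bound; the exact sign conventions in the displayed formula follow from the chosen signs of $v$ and $b$ and a relabelling in the Lagrangian.

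Finally, for the algorithmic claim I verify that $g$ is convex in $\lambda$: the map $\lambda \mapsto (z+\lambda v)^{(h,w,l)}$ is affine, the componentwise positive part is convex and nondecreasing into the non-negative orthant, the Euclidean norm is convex and nondecreasing on that orthant, so each summand is convex; adding the linear term $\lambda b$ preserves convexity. Hence $\minop_{\lambda\geq 0}g$ is a one-dimensional convex program solvable by bisection on $g$ or on a subgradient of $g$. The main obstacle is exactly the non-convex spherical equality constraints; the trick that defuses them is the identity $\norm{x}^2=L$, after which the spheres enter only inside the inner maximization and separate into independent sphere-intersected-with-orthant problems whose solution is available in closed form.
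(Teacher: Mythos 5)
Your argument is correct in substance but takes a genuinely different route from the paper's. The paper dualizes \emph{all} constraints of \eqref{eq:r-lpips} at once --- a multiplier $\lambda$ for the halfspace, multipliers $\alpha_l$ for the quadratic sphere equalities and $\mu \geq 0$ for nonnegativity --- computes the stationary point of this non-convex Lagrangian, and then eliminates $\alpha$ and $\mu$ in closed form, invoking weak duality only at the very end. You instead dualize only the single halfspace constraint and keep the spheres and the orthant inside the inner maximization, which decouples over blocks via $\maxop_{\norm{y}_2=r_l,\,y\geq 0}\inner{y,c}=r_l\norm{c^+}_2$. This is arguably cleaner: the non-convex equality constraints never enter a dual, the convexity of the resulting one-dimensional problem in $\lambda$ is transparent, and the minor caveat that the block formula is only an upper bound when $c$ has no positive entry is harmless, since you only need an upper bound on $\maxop \inner{x,z}$. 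Both routes terminate at the same one-dimensional problem, so the approaches agree in outcome but differ in mechanics; your use of $\norm{z}_2^2=L$ for the embedded point $z$ matches the paper's use of $\norm{z^{(l)}}_2=r_l$.

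One point you should not wave away: your derivation yields $-\sum_{h,w,l} r_l\norm{\left(z^{(h,w,l)}+\lambda v^{(h,w,l)}\right)^+}_2-\lambda b$, whereas the proposition displays $-\sum_{h,w,l} r_l\norm{\left(z^{(h,w,l)}-\lambda v^{(h,w,l)}\right)^+}_2+\lambda b$. With $\lambda\geq 0$ enforced in both, these are different functions (they correspond to $\lambda\mapsto-\lambda$, i.e.\ to replacing $(v,b)$ by $(-v,-b)$), so ``a relabelling in the Lagrangian'' does not reconcile them. Your sign is the one dictated by standard weak duality for the constraint $\inner{x,v}+b\geq 0$ in a minimization; the paper's appendix adds $+\lambda\left(\inner{v,x}+b\right)$ to its Lagrangian, which is where the displayed signs originate. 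Indeed, with the stated orientation of $v,b$ and a correctly classified $z$ one has $\inner{z,v}+b<0$, so the displayed (concave in $\lambda$) expression is maximized at $\lambda=0$ and would give the vacuous bound $0$, while your expression is increasing at $\lambda=0$ and delivers the intended non-trivial certificate. So state the result with your signs, or equivalently redefine $v=w_i-w_j$ and $b=\frac{\norm{w_j}_2^2-\norm{w_i}_2^2}{2}$, rather than appealing to relabelling.
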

In the experimental results in Figure \ref{fig:rcurve-lpips} one can clearly see that using this lower bound improves significantly over the standard lower bound of Lemma~\ref{lem:minmaxrho}.


\section{Efficient Training of \ours}\label{sec:training}
In this section we describe the training procedure for our \ours. The key advantage compared to the work of \cite{Saralajew2020fast} is that despite our lower bounds, see Theorem \ref{th:betterbound}, are better and often tight, they can be computed with the same time complexity as theirs if $p \in \{2, \infty\}$. Thus we can do efficient certified training. As objective we use the capped sum of the lower bounds:

\[ \maxop_{(w_i)_{i \in I}} \frac{1}{n}\sum_{r=1}^n\min\Big\{\minop_{j \in I_y^c}\maxop_{i \in I_y}\rho^q_p(z_r)_{i,j},R\Big\},\]

where we recall the definition of $\rho^q_p$ from~\ref{eq:rhopq}:

\begin{align}
 \rho^q_p(z)_{i,j}=\minop_{x \in \R^d} & \quad\norm{x-z}_q \\
	 \textrm{sbj. to:} & \quad \norm{x-w_i}_p - \norm{x-w_j}_p \geq 0 \nonumber\\
	                   & x \in \X \nonumber
\end{align}

and $R$ is an upper bound on the margin we want to enforce.
The cap is introduced in order to avoid that single training points have excessive margin at the price of many others having small margin; in turn, it is equivalent to minimizing hinge-loss. \vv{The loss is minimized via stochastic gradient descent resp. ADAM with large batch sizes.} Note further that, for misclassified points we use a signed version of  $\rho^q_p(z_r)_{i,j}$ by flipping the constraint in \eqref{eq:rhopq} and using 
$-\rho^q_p(z_r)_{j,i}$ instead,
which can be interpreted as signed distance to the decision boundary.
Doing this has the advantage that we get gradient information from all points. 
Maximizing our objective has a direct interpretation in terms of maximizing robust accuracy or more precisely the area under the robustness curve capped at radius $R$. This is in contrast to \cite{Saralajew2020fast} who use as loss their lower bound divided by the sum of the distances where this interpretation is due to the rescaling not applicable.

\section{Experiments}\label{sec:exp}
\vv{The code for experiments is available in our repository\footnote{\href{https://github.com/vvoracek/Provably-Adversarially-Robust-Nearest-Prototype-Classifiers}{https://github.com/vvoracek/Provably-Adversarially-Robust-Nearest-Prototype-Classifiers}.} where we also provide the training details.}
We first evaluate the improvements in the certification of better lower bounds resp. exact computation compared to the ones of \cite{Saralajew2020fast} as well as \cite{wang2019evaluating}. In a second set of experiments we compare our $\ell_p$-\ours to the $\ell_p$-NPC of \cite{Saralajew2020fast} resp. to nearest neighbor classification as well as deterministic and probabilistic certification techniques for neural networks on MNIST and CIFAR10 (see App. \ref{app:cifar}). Finally, we discuss our NPC using the perceptual metric and its certification where there is no competitor as up to our knowledge this is the first paper providing robustness certificates.  \vv{The training time is about a few hours on a laptop.}

\textbf{Comparison of our lower bounds:}
One of the major contributions of this paper are our efficient lower bounds on the minimal adversarial perturbation $\epsilon^q_p(z)$. They can be computed so fast that it is feasible to use them during training. We show in Table \ref{tab:lbounds} that our $\ell_q$-bounds improve significantly over the ones of \cite{Saralajew2020fast} (Th. \ref{thm:trivlb}, $\X=\R^d$), which only work if $p=q$ and \cite{wang2019evaluating} (Lemma~\ref{lem:minmaxrho}, $\X=\R^d$, see \eqref{eq:l2-lb} for $p=2$)  as we are the only ones who integrate box constraints (Lemma~\ref{lem:minmaxrho}, $\X=[0,1]^d$). In Table~\ref{tab:lbounds}, we show that for  the $\ell_1$-, $\ell_2$- and $\ell_\infty$ threat models   our lower bounds are very close to the exact values. The computation of these lower bounds takes for the \textbf{full} test set of MNIST: $\ell_1$: 188s, $\ell_2$: 33s, $\ell_\infty$: 131s. This is two orders of magnitude faster than the computation of the exact bounds in Table \ref{tab:time}.
\begin{table}[t]
\caption{\textbf{Lower bounds on $\epsilon^q_p(z)$.} Mean of the lower bounds of \cite{Saralajew2020fast} (Theorem~\ref{thm:trivlb}), the lower bounds of \cite{wang2019evaluating}) in \eqref{eq:l2-lb} ($\X=\R^d$), our lower bounds integrating $\X=[0,1]^d$ and the exact radius on the test set for $\ell_2$-NPC for $\ell_1$-,$\ell_2$- and $\ell_\infty$-threat model.}\label{tab:lbounds}
\vspace{-2mm}
\begin{center}
\begin{small}
\setlength{\tabcolsep}{1.5pt}
\begin{tabular}{l|c|c|ccc|c|}
      &         &     &  \multicolumn{3}{c|}{Lower bounds} & Exact          \\
Model & Num.& Threat & Th. 2.2 & Th. 2.6  & Th. 2.6 & radius\\
     &  Proto.        &  model      & $\R^d$ & $\R^d$ & $[0,1]^d$ & $[0,1]^d$ \\
\midrule
\multirow{2}{*}{$\ell_2$-\ours} & \multirow{3}{*}{4000} & $\ell_1$ & - & 9.71 & 11.77 & 12.11\\
& & $\ell_2$ & 0.39 & 1.86 & 1.96 & 1.99 \\
MNIST&
& $\ell_\infty$ & - & 0.14 & 0.16 & 0.17\\
\bottomrule
\end{tabular}
\end{small}
\end{center}
\end{table}
For our $\ell_\infty$-NPC and $\ell_\infty$-threat model we get mean lower bounds of $0.3545$ for \cite{Saralajew2020fast}, $0.3560$ for the ones from \eqref{eq:linf-linf} with $\X=\R^d$, and $0.3616$ for ours from Lemma~\ref{lem:minmaxrho} with $\X=[0,1]^d$ in~\eqref{eq:linf_linf_box}. Here the differences are  smaller than for the $\ell_2$-NPC.

\textbf{Time for certification:}
The computation of the exact minimal adversarial perturbation is only feasible for relatively small neural networks \cite{TjeTed2017} and for ensemble of decision trees \cite{kantchelian2016evasion}. Both use mixed-integer formulations which do not scale well. For boosted decision stumps one can compute the exact robust accuracy \cite{andriushchenko2019provably}. However, the computation of the exact robust accuracy is already considerably easier than the minimal adversarial perturbation. For $\ell_2$-NPC we can compute the exact adversarial perturbation for the $\ell_1$-, $\ell_2$-, and $\ell_\infty$-threat model. In Table \ref{tab:time} we report the certification time per point and other statistics for our $\ell_2$-\ours prototypes on MNIST with 400 prototypes per class (ppc) and the $\ell_2$-GLVQ -model of \cite{Saralajew2020fast} on CIFAR10 with 128 ppc. \vv{We can also produce weaker certificates faster. For instance, using Lemma~\ref{lem:minmaxrho}, we can certify MNIST robust accuracy $67\%$ in under $2s$ instead of the exact $73\%$ reported in Table~\ref{tab:MNIST-l2}.}

Regarding the model of $\ell_2$-GLVQ on CIFAR10, we have an accuracy of $48.6\%$ (which corresponds to 4859 correctly classified test points). Of these ones
we can solve between 72.2\% for $\ell_\infty$ and $75.8\%$ for $\ell_1$ directly using Lemma \ref{lem:minmaxrho} by checking the condition after the computation of the lower bounds. This shows the usefulness of Lemma \ref{lem:minmaxrho} as it avoids a lot of QPs ($\ell_2$) rsp. LPs ($\ell_1,\ell_\infty$) to be solved. Next we see that the number of LPs/QPs needed to be solved per point is less than 1.43 which has to be compared to the worst case of $|I_y^c|=1152$. This shows that our prior reduction using our tight lower bounds integrating box constraints helps  to significantly reduce the number of problems $r^q_p(z)_j$ which need to be solved. In total we get certification times between $0.25s$ ($\ell_2$) and $0.9s$ ($\ell_\infty$) per point which allows us to do the exact certification for all three threat models.

\begin{table}[t]
\caption{\textbf{Time/Statistics for exact minimal adversarial perturbation} for $\ell_2$-NPC}\label{tab:time}
\vspace{-2mm}
\begin{center}
\begin{small}
\setlength{\tabcolsep}{1.5pt}
\begin{tabular}{l|c|c|cccc|}
Model & Num.& Threat & Direct & Total  & QP/LP& Cert. Time \\
     &  Proto.        & model & solved & QP/LP & per pt & per pt\\
\midrule
\multirow{2}{*}{$\ell_2$-\ours} & \multirow{3}{*}{4000} & $\ell_1$ & 4261 (43.8\%) &  10195 & 1.86 & 0.54s\\
& 
& $\ell_2$ & 3170   (32.6\%) & 11630 & 1.77& 0.49s\\
MNIST&
& $\ell_\infty$ & 2073 (21.3\%) & 21081 & 2.75 & 1.3s\\
\midrule
\multirow{2}{*}{$\ell_2$-GLVQ} & \multirow{3}{*}{1280} & $\ell_1$ &  3683 (75.8\%) & 1817 & 1.54 &  0.76s\\
& 
& $\ell_2$ & 3546 (73.0\%) & 1777 & 1.35 & 0.25s\\
CIFAR10&
& $\ell_\infty$ & 3511 (72.2\%) & 1933 & 1.43 &  0.9s\\
\bottomrule
\end{tabular}
\end{small}
\end{center}
\end{table}

\textbf{Evaluation of our NPC:}
We report certified robust accuracy (CRA) and upper bounds on robust accuracy (URA), e.g. computed via an adversarial attack, on MNIST and CIFAR10 (in App. \ref{app:cifar}) for \ours and the GLVQ of \cite{Saralajew2020fast}. For $\ell_2$-NPC CRA and URA are equal as we compute exact adversarial perturbations. As an interesting baseline, we report results for the one nearest neighbor classifier (1NN). Additionally, we compare to deterministic and probabilistic certification techniques of neural networks.

\begin{table}[t]
\caption{\textbf{MNIST:} lower (CRA) and upper bounds (URA) on $\ell_2$-robust accuracy for $\ell_2$-NPC}\label{tab:MNIST-l2}
\vspace{-2mm}
\begin{center}
\begin{small}
\setlength{\tabcolsep}{2pt}
\begin{tabular}{l|c|cc|cc|cc|}
MNIST & std. & \multicolumn{2}{c|}{$\epsilon_2=1.5$} & 
\multicolumn{2}{c|}{$\epsilon_2=1.58$} & 
\multicolumn{2}{c|}{$\epsilon_2=2$} \\
& acc. & CRA & URA & CRA & URA & CRA & URA\\
\midrule
$\ell_2$-\ours & 97.3 & \textbf{75.5} & 75.5 & \textbf{73.0} & 73.0 & \textbf{56.1} & 56.1\\
$\ell_2$-GLVQ 
& 95.8 & 69.7 & 69.7 & 67.1 & 67.1 & 53.5 & 53.5\\
1-NN 
& 96.9 & 52.1 & 52.1 & 47.3 & 47.3 & 23.7 & 23.7\\
\midrule
GloRob 
& 97.0 & - & - & 62.8 & 81.9 & - & - \\
OrthConv 
& \textbf{98.1} & - & - & 61.0 & 75.5 & - & - \\
LocLip 
& 96.3 & - & - & 55.8 & 78.2 & - & - \\
BCP 
& 92.4 & - & - & 47.9 & 64.7 & - & -\\
CAP 
& 88.1 & - & - & 44.5 & 67.9 & - & - \\
\midrule
SmoothLip$_{\sigma=0.5}$ 
&  \textbf{98.7} & \textbf{81.8$^*$} & - & - & -  & 0$^*$ & -\\
SmoothLip$_{\sigma=1}$ 
&  93.7 & 62.7$^*$ &  - & - & -  & 44.9$^*$ & -\\
\bottomrule
\end{tabular}
\end{small}
\end{center}
\vskip -0.1in
\end{table}

\textbf{MNIST - $\ell_2$-NPC:} In Table \ref{tab:MNIST-l2} we show the results for the \textbf{$\ell_2$-threat model} on MNIST. Our $\ell_2$-\ours outperforms the $\ell_2$-GLVQ 
for all $\epsilon_2$. The values for $\epsilon_2$ were chosen 
according to the neural network literature. Note that our $\ell_2$-\ours outperforms all deterministic methods: GlobRob \cite{leino2021globallyrobust}, OrthConv~\cite{singla2022improved}, LocLip \cite{huang2021training}, BCP \cite{Lee2020Lipschitz}
and CAP \cite{WonEtAl18} in terms of certified robust accuracy and often in the terms of clean accuracy. \vv{For the details on comparison with~orthogonal convolutions, see Appendix~\ref{app:SOC}}. The randomized smoothing approach SmoothLip of \cite{jeong2021smoothmix} outperforms us for $\sigma=0.5$ in terms of clean accuracy and robust accuracy at $\epsilon_2=1.5$ but their robust accuracy at $\epsilon_2=2$ is zero, whereas we have $56.1\%$ exact robust accuracy. Their second model with $\sigma=1$ which is able to certify also larger radii is in all aspects worse than our $\ell_2$-\ours. This shows that our certified prototype classifiers can challenge neural networks in terms of certified robust accuracy. Moreover, 
\cite{Saralajew2020fast} report for their $\ell_2$-GLVQ  a certified robust accuracy of $34.4\%$ at $\epsilon=1.58$ whereas with our exact computation we get that their exact robust accuracy is 67.1\%. This shows the quality of our exact certification techniques. With our certified training \ours has $6\%$ better robust accuracy  and $1.5\%$ better standard accuracy ($97.3\%$ vs. $95.8\%$) than $\ell_2$-GLVQ. 

\begin{table}[t]
\caption{\textbf{MNIST:} lower (CRA) and upper bounds (URA) on robust accuracy for multiple threat models for our $\ell_2$-\ours, the $\ell_2$-NPC of \cite{Saralajew2020fast}, a 1-NN classifier. As comparison we show MMR-Univ of \cite{croce2020provable} which is a neural network specifically trained for certifiable multiple-norm robustness.\label{tab:MNIST-mult}}
\begin{center}
\begin{small}
\setlength{\tabcolsep}{2pt}
\begin{tabular}{l|c|cc|cc|cc|cc|}
MNIST & std. & \multicolumn{2}{c|}{$\epsilon_1=1$} & 
\multicolumn{2}{c|}{$\epsilon_2=0.3$} & 
\multicolumn{2}{c|}{$\epsilon_\infty=0.1$} &\multicolumn{2}{c|}{union}\\
& acc. & CRA & URA & CRA & URA & CRA & URA & CRA & URA\\
\midrule
$\ell_2$-\ours & \textbf{97.3} & \textbf{96.2} & 96.2 & \textbf{95.6} & 95.6 & 85.8 & 85.8 & \textbf{85.8} & 85.8\\
$\ell_2$-GLVQ & 95.8 & 94.2 & 94.2 & 93.2 & 93.2 & 80.9 & 80.9 & 80.9 & 80.9\\
1-NN  & 96.9 & 95.0 & - & 93.6 & 93.6 & 78.3 & - & 78.3 & -\\
\midrule
MMR-U & 97.0 & 79.2 & 93.6 & 89.6 & 93.8 & \textbf{87.6} & 87.6 & 79.2 & 87.6\\
\bottomrule
\end{tabular}
\end{small}
\end{center}
\vskip -0.1in
\end{table}

 The advantage of our $\ell_2$-NPC is that we can certify any $\ell_q$-threat model, especially $\ell_1$ and $\ell_\infty$. This allows us to compute the \textbf{exact robust accuracy in the union of the $\ell_1$-, $\ell_2$- and $\ell_\infty$-balls.} The only other approach which has provided certified lower bounds (CRA) on multiple-norm robustness is  MMR-U from \cite{croce2020provable} who certify a neural network. In Table \ref{tab:MNIST-mult} we compare our multiple-norm robust accuracy for the $\epsilon_q$ which were chosen in \cite{croce2020provable}. 
 Our $\ell_2$-\ours outperforms MMR-U significantly in terms of certified $\ell_1$-and $\ell_2$-robustness as well as in the union.

\begin{table}[t]
\caption{\textbf{MNIST:} lower (CRA) and upper bounds (URA) on $\ell_\infty$-robust accuracy for $\ell_\infty$-NPC obtained using Lemma~\ref{lem:minmaxrho}.   }\label{tab:MNIST-linflinf}
\begin{center}
\begin{small}
\setlength{\tabcolsep}{2pt}
\begin{tabular}{l|c|cc|cc|cc|}
MNIST & std. &  \multicolumn{2}{c|}{$\epsilon_\infty=0.1$} & 
 \multicolumn{2}{c|}{$\epsilon_\infty=0.3$} & 
 \multicolumn{2}{c|}{$\epsilon_\infty=0.4$}\\
& acc. & CRA & URA & CRA & URA & CRA & URA\\
\midrule

$\ell_\infty$-\ours &94.69  &  91.19 & 91.19 &  78.68& 78.86& 65.58 & 65.96  \\
\midrule
$\ell_\infty$-GLVQ  &96.34  &  93.52 & 93.52 &  80.76& 81.04& 61.29 &62.94   \\
\midrule
$\ell_\infty$-neuron & \textbf{98.6} & - & - & \textbf{93.1} & 95.3 & - & - \\
CROWN-IBP & 98.2 & - & - & 93.0 & 94.0 & \textbf{87.4} & 90.4\\
ReLU-S & 97.3 & - & - & 80.7 & 92.1 & - & - \\
CAP & 87.4 & - & - & 56.9 & - & - & - \\
\bottomrule
\end{tabular}
\end{small}
\end{center}
\vskip -0.1in
\end{table}

\textbf{MNIST - $\ell_\infty$-NPC}
We compare our $\ell_\infty$-\ours to the $\ell_\infty$-GLVQ of \cite{Saralajew2020fast}. For reference we provide the best results for the $\ell_\infty$-certfied neural networks: $\ell_\infty$-neurons \cite{pmlr-v139-zhang21b}, CROWN-IBP \cite{zhang2019stable}, as well as slightly older results; ReLU-stability \cite{xiao2019training} and CAP \cite{WonEtAl18} to put our results into context. We perform slightly worse than \cite{Saralajew2020fast} for small radii, but significantly better for the bigger one. 
Due to our better lower bounds but also by using AutoAttack \cite{croce2020reliable} for computing the upper bounds we close the gap between upper and lower bounds from $4.2\%$ in \cite{Saralajew2020fast} to $0.3\%$. \vv{To attack the classifier with AutoAttack, we interpret the negative distance to the closest prototype from a particular class as the logit value.}

\paragraph{Perceptual metric NPC}
As discussed in Section \ref{sec:perceptual} it is unlikely that $\ell_p$-NPC will work for image classifcation tasks like CIFAR10.
However, with the perceptual metric LPIPS (based on Alexnet) which corresponds to an $\ell_2$-metric in the embedding space, we get much better results with our Perceptual-\ours (P-\ours).
 In Figure \ref{fig:rcurve-lpips} we show the certified robust accuracy (lower bound of Lemma~\ref{lem:minmaxrho}) as a function of the LPIPS-radius for the standard $\ell_2$-lower bounds and for the improved lower bounds taking into account the constraints of the embedding. We have three important observations. We achieve a clean accuracy of 80.3$\%$ which is quite remarkable for a classifier with certified robust accuracy. Second, this is up to our knowledge the first result on certified robustness with respect to the LPIPS-threat model. Third, \cite{laidlaw2021perceptual} who do empirical perceptual adversarial training with a a ResNet 50 get only
 $71.6\%$ clean accuracy and only a URA of $9.8\%$ which is more than $30\%$ \textbf{worse} than our CRA of $40.5\%$. 
Moreover, our URA computed using the LPA-attack of \cite{laidlaw2021perceptual} is with $70.3\%$ remarkably high. These are very promising results justifying more research in \ours for perceptual metrics.

\vv{On the other hand, in~\cite{laidlaw2021perceptual} it is noted that models trained to be robust w.r.t. LPIPS-threat model  are empirically robust also to other threat models such as $\ell_2$ or $\ell_\infty$ - even though one has to state that their model has only a robust accuracy of 9.8\%. This generalization does not hold for P-PNPC. For $\ell_\infty$ threat model, we observed (empirical) robust accuracies $49\%$, $23\%$, $2\%$, $0\%$ for radii $1/255, 2/255, 4/255, 8/255$. For $\ell_2$ we have robust accuracy $51\%, 29\%, 5\%, 0\%$ for radii $0.14, 0.25, 0.5, 1$. While the robust accuracies are non-trivial, they are not comparable to the ones achieved in~\cite{laidlaw2021perceptual}. As our P-PNPC is much more robust with respect to the LPIPS-threat model than the neural network of \cite{laidlaw2021perceptual}, it is thus an open question if this threat model leads indeed to a generalization to other threat models.}

\begin{figure}[ht]
\begin{center}
\includegraphics[width=0.99\columnwidth]{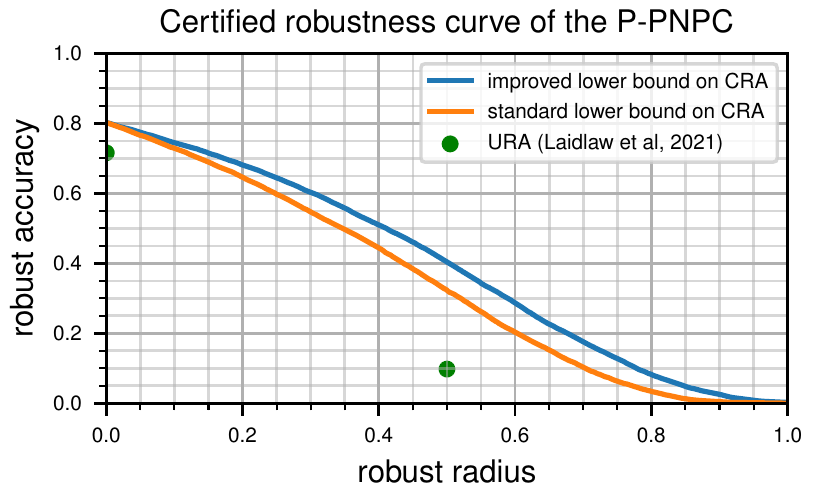}
\caption{\label{fig:rcurve-lpips}The certified robust accuracy as a function of the radius of the LPIPS-threat model. Integrating the spherical plus non-negativity constraints leads to huge improvements. The standard accuracy as well as the \textbf{empirical} robust accuracy of \cite{laidlaw2021perceptual} are \emph{worse} than  \textbf{certified} robust accuracy of P-\ours by a large margin.}
\end{center}
\end{figure}
\section{Conclusion}
We have provided theoretical foundations as well as  efficient algorithmic tools for the computation of the exact minimal adversarial perturbation, as well as lower bounds, for nearest prototype classifiers for several threat models, including the perceptual metric LPIPS. We have shown SOTA performance for deterministic $\ell_2$-certification on MNIST and remarkably strong certified robustness results with respect to the LPIPS metric. Thus we think that NPC deserve more attention in our research community.

\section*{Acknowledgements}
\vv{The authors acknowledge support from the German Federal
Ministry of Education and Research (BMBF) through the
T{\"u}bingen AI Center (FKZ: 01IS18039A) and the DFG Cluster of Excellence ``Machine Learning – New Perspectives for
Science”, EXC 2064/1, project number 390727645.}
The authors are thankful for the support of Open Philanthropy.

\bibliography{main}

\begin{thebibliography}{42}
\providecommand{\natexlab}[1]{#1}
\providecommand{\url}[1]{\texttt{#1}}
\expandafter\ifx\csname urlstyle\endcsname\relax
  \providecommand{\doi}[1]{doi: #1}\else
  \providecommand{\doi}{doi: \begingroup \urlstyle{rm}\Url}\fi

\bibitem[Andriushchenko \& Hein(2019)Andriushchenko and
  Hein]{andriushchenko2019provably}
Andriushchenko, M. and Hein, M.
\newblock Provably robust boosted decision stumps and trees against adversarial
  attacks.
\newblock In \emph{NeurIPS}, 2019.

\bibitem[Athalye et~al.(2018)Athalye, Carlini, and Wagner]{AthEtAl2018}
Athalye, A., Carlini, N., and Wagner, D.~A.
\newblock Obfuscated gradients give a false sense of security: Circumventing
  defenses to adversarial examples.
\newblock In \emph{ICML}, 2018.

\bibitem[Bertsimas et~al.(2018)Bertsimas, Dunn, Pawlowski, and
  Zhuo]{bertsimas2018robust}
Bertsimas, D., Dunn, J., Pawlowski, C., and Zhuo, Y.~D.
\newblock Robust classification.
\newblock \emph{INFORMS Journal on Optimization}, 1:\penalty0 2--34, 2018.

\bibitem[Biggio et~al.(2013)Biggio, Corona, Maiorca, Nelson, {\v{S}}rndi{\'c},
  Laskov, Giacinto, and Roli]{biggio2013evasion}
Biggio, B., Corona, I., Maiorca, D., Nelson, B., {\v{S}}rndi{\'c}, N., Laskov,
  P., Giacinto, G., and Roli, F.
\newblock Evasion attacks against machine learning at test time.
\newblock In \emph{Joint European conference on machine learning and knowledge
  discovery in databases}, pp.\  387--402. Springer, 2013.

\bibitem[Carlini et~al.(2019)Carlini, Athalye, Papernot, Brendel, Rauber,
  Tsipras, Goodfellow, Madry, and Kurakin]{carlini2019evaluating}
Carlini, N., Athalye, A., Papernot, N., Brendel, W., Rauber, J., Tsipras, D.,
  Goodfellow, I., Madry, A., and Kurakin, A.
\newblock On evaluating adversarial robustness.
\newblock \emph{arXiv preprint arXiv:1902.06705}, 2019.

\bibitem[Chen et~al.(2019)Chen, Zhang, Boning, and Hsieh]{chen2019robust}
Chen, H., Zhang, H., Boning, D., and Hsieh, C.-J.
\newblock Robust decision trees against adversarial examples.
\newblock In \emph{ICML}, 2019.

\bibitem[Cohen et~al.(2019)Cohen, Rosenfeld, and Kolter]{CohenARXIV2019}
Cohen, J.~M., Rosenfeld, E., and Kolter, J.~Z.
\newblock Certified adversarial robustness via randomized smoothing.
\newblock In \emph{NeurIPS}, 2019.

\bibitem[Crammer et~al.(2003)Crammer, Gilad-bachrach, Navot, and
  Tishby]{NIPS2002_bbaa9d6a}
Crammer, K., Gilad-bachrach, R., Navot, A., and Tishby, N.
\newblock Margin analysis of the lvq algorithm.
\newblock In \emph{NeurIPS}, 2003.

\bibitem[Croce \& Hein(2020{\natexlab{a}})Croce and Hein]{croce2020provable}
Croce, F. and Hein, M.
\newblock Provable robustness against all adversarial $l_p$-perturbations for
  $p\geq 1$.
\newblock In \emph{ICLR}, 2020{\natexlab{a}}.

\bibitem[Croce \& Hein(2020{\natexlab{b}})Croce and Hein]{croce2020reliable}
Croce, F. and Hein, M.
\newblock Reliable evaluation of adversarial robustness with an ensemble of
  diverse parameter-free attacks, 2020{\natexlab{b}}.

\bibitem[Goodfellow et~al.(2015)Goodfellow, Shlens, and Szegedy]{GooShlSze2015}
Goodfellow, I.~J., Shlens, J., and Szegedy, C.
\newblock Explaining and harnessing adversarial examples.
\newblock In \emph{ICLR}, 2015.

\bibitem[Gowal et~al.(2018)Gowal, Dvijotham, Stanforth, Bunel, Qin, Uesato,
  Arandjelovic, Mann, and Kohli]{GowEtAl18}
Gowal, S., Dvijotham, K., Stanforth, R., Bunel, R., Qin, C., Uesato, J.,
  Arandjelovic, R., Mann, T.~A., and Kohli, P.
\newblock On the effectiveness of interval bound propagation for training
  verifiably robust models.
\newblock preprint, arXiv:1810.12715v3, 2018.

\bibitem[Hein \& Andriushchenko(2017)Hein and Andriushchenko]{HeiAnd2017}
Hein, M. and Andriushchenko, M.
\newblock Formal guarantees on the robustness of a classifier against
  adversarial manipulation.
\newblock In \emph{NeurIPS}, 2017.

\bibitem[Huang et~al.(2021)Huang, Zhang, Shi, Kolter, and
  Anandkumar]{huang2021training}
Huang, Y., Zhang, H., Shi, Y., Kolter, J.~Z., and Anandkumar, A.
\newblock Training certifiably robust neural networks with efficient local
  lipschitz bounds.
\newblock In \emph{NeurIPS}, 2021.

\bibitem[Jeong et~al.(2021)Jeong, Park, Kim, Lee, Kim, and
  Shin]{jeong2021smoothmix}
Jeong, J., Park, S., Kim, M., Lee, H.-C., Kim, D., and Shin, J.
\newblock Smoothmix: Training confidence-calibrated smoothed classifiers for
  certified robustness.
\newblock In \emph{NeurIPS}, 2021.

\bibitem[Kantchelian et~al.(2016)Kantchelian, Tygar, and
  Joseph]{kantchelian2016evasion}
Kantchelian, A., Tygar, J., and Joseph, A.
\newblock Evasion and hardening of tree ensemble classifiers.
\newblock In \emph{ICML}, 2016.

\bibitem[Kireev et~al.(2021)Kireev, Andriushchenko, and
  Flammarion]{kireev2022effectiveness}
Kireev, K., Andriushchenko, M., and Flammarion, N.
\newblock On the effectiveness of adversarial training against common
  corruptions.
\newblock \emph{arXiv preprint, arXiv:2103.02325}, 2021.

\bibitem[Kohonen(1995)]{Kohonen1995}
Kohonen, T.
\newblock \emph{Learning Vector Quantization}, pp.\  175--189.
\newblock Springer Berlin Heidelberg, 1995.

\bibitem[Laidlaw et~al.(2021)Laidlaw, Singla, and Feizi]{laidlaw2021perceptual}
Laidlaw, C., Singla, S., and Feizi, S.
\newblock Perceptual adversarial robustness: Defense against unseen threat
  models.
\newblock In \emph{ICLR}, 2021.

\bibitem[Lee et~al.(2020)Lee, Lee, and Park]{Lee2020Lipschitz}
Lee, S., Lee, J., and Park, S.
\newblock Lipschitz-certifiable training with a tight outer bound.
\newblock In \emph{NeurIPS}, 2020.

\bibitem[Leino et~al.(2021)Leino, Wang, and
  Fredrikson]{leino2021globallyrobust}
Leino, K., Wang, Z., and Fredrikson, M.
\newblock Globally-robust neural networks.
\newblock In \emph{ICML}, 2021.

\bibitem[Li et~al.(2020)Li, Qi, Xie, and Li]{li2020sokcertified}
Li, L., Qi, X., Xie, T., and Li, B.
\newblock Sok: Certified robustness for deep neural networks.
\newblock \emph{arXiv preprint arXiv:2009.04131}, 2020.

\bibitem[Li et~al.(2019)Li, Haque, Anil, Lucas, Grosse, and
  Jacobsen]{li2019preventing}
Li, Q., Haque, S., Anil, C., Lucas, J., Grosse, R.~B., and Jacobsen, J.-H.
\newblock Preventing gradient attenuation in lipschitz constrained
  convolutional networks.
\newblock In \emph{NeurIPS}, 2019.

\bibitem[Mirman et~al.(2018)Mirman, Gehr, and Vechev]{MirGehVec2018}
Mirman, M., Gehr, T., and Vechev, M.
\newblock Differentiable abstract interpretation for provably robust neural
  networks.
\newblock In \emph{ICML}, 2018.

\bibitem[Papernot et~al.(2016)Papernot, McDaniel, and
  Goodfellow]{papernot2016transferability}
Papernot, N., McDaniel, P., and Goodfellow, I.
\newblock Transferability in machine learning: from phenomena to black-box
  attacks using adversarial samples, 2016.

\bibitem[Russu et~al.(2016)Russu, Demontis, Biggio, Fumera, and
  Roli]{russu2016secure}
Russu, P., Demontis, A., Biggio, B., Fumera, G., and Roli, F.
\newblock Secure kernel machines against evasion attacks.
\newblock In \emph{ACM workshop on AI and security}. ACM, 2016.

\bibitem[Saralajew et~al.(2020)Saralajew, Holdijk, and
  Villmann]{Saralajew2020fast}
Saralajew, S., Holdijk, L., and Villmann, T.
\newblock Fast adversarial robustness certification of nearest prototype
  classifiers for arbitrary seminorms.
\newblock In \emph{NeurIPS}, 2020.

\bibitem[Singla et~al.(2022)Singla, Singla, and Feizi]{singla2022improved}
Singla, S., Singla, S., and Feizi, S.
\newblock Improved deterministic l2 robustness on {CIFAR}-10 and {CIFAR}-100.
\newblock In \emph{ICLR}, 2022.

\bibitem[Szegedy et~al.(2014)Szegedy, Zaremba, Sutskever, Bruna, Erhan,
  Goodfellow, and Fergus]{SzeEtAl2014}
Szegedy, C., Zaremba, W., Sutskever, I., Bruna, J., Erhan, D., Goodfellow, I.,
  and Fergus, R.
\newblock Intriguing properties of neural networks.
\newblock In \emph{ICLR}, pp.\  2503--2511, 2014.

\bibitem[Tjeng \& Tedrake(2017)Tjeng and Tedrake]{TjeTed2017}
Tjeng, V. and Tedrake, R.
\newblock Verifying neural networks with mixed integer programming.
\newblock preprint, arXiv:1711.07356v1, 2017.

\bibitem[Tramer et~al.(2020)Tramer, Carlini, Brendel, and
  Madry]{tramer2020adaptive}
Tramer, F., Carlini, N., Brendel, W., and Madry, A.
\newblock On adaptive attacks to adversarial example defenses.
\newblock In \emph{NeurIPS}, 2020.

\bibitem[Trockman \& Kolter(2021)Trockman and
  Kolter]{trockman2021orthogonalizing}
Trockman, A. and Kolter, J.~Z.
\newblock Orthogonalizing convolutional layers with the cayley transform.
\newblock In \emph{ICLR}, 2021.

\bibitem[Wang et~al.(2019)Wang, Liu, Yi, Zhou, and Hsieh]{wang2019evaluating}
Wang, L., Liu, X., Yi, J., Zhou, Z.-H., and Hsieh, C.-J.
\newblock Evaluating the robustness of nearest neighbor classifiers: A
  primal-dual perspective.
\newblock \emph{arXiv preprint, arXiv:1906.03972}, 2019.

\bibitem[Wang et~al.(2018)Wang, Jha, and Chaudhuri]{wang2018analyzing}
Wang, Y., Jha, S., and Chaudhuri, K.
\newblock Analyzing the robustness of nearest neighbors to adversarial
  examples.
\newblock In \emph{ICML}, 2018.

\bibitem[Wang et~al.(2004)Wang, Bovik, Sheikh, and Simoncelli]{wang2004image}
Wang, Z., Bovik, A.~C., Sheikh, H.~R., and Simoncelli, E.~P.
\newblock Image quality assessment: from error visibility to structural
  similarity.
\newblock \emph{IEEE transactions on image processing}, 13\penalty0
  (4):\penalty0 600--612, 2004.

\bibitem[Wong \& Kolter(2018)Wong and Kolter]{WonKol2018}
Wong, E. and Kolter, J.~Z.
\newblock Provable defenses against adversarial examples via the convex outer
  adversarial polytope.
\newblock In \emph{ICML}, 2018.

\bibitem[Wong et~al.(2018)Wong, Schmidt, Metzen, and Kolter]{WonEtAl18}
Wong, E., Schmidt, F., Metzen, J.~H., and Kolter, J.~Z.
\newblock Scaling provable adversarial defenses.
\newblock In \emph{NeurIPS}, 2018.

\bibitem[Xiao et~al.(2019)Xiao, Tjeng, Shafiullah, and Madry]{xiao2019training}
Xiao, K.~Y., Tjeng, V., Shafiullah, N.~M., and Madry, A.
\newblock Training for faster adversarial robustness verification via inducing
  relu stability.
\newblock In \emph{ICLR}, 2019.

\bibitem[Xu et~al.(2009)Xu, Caramanis, and Mannor]{XuCarMan2009}
Xu, H., Caramanis, C., and Mannor, S.
\newblock Robustness and regularization of support vector machines.
\newblock \emph{Journal of Machine Learning Research}, 10:\penalty0 1485--1510,
  2009.

\bibitem[Zhang et~al.(2021)Zhang, Cai, Lu, He, and Wang]{pmlr-v139-zhang21b}
Zhang, B., Cai, T., Lu, Z., He, D., and Wang, L.
\newblock Towards certifying l-infinity robustness using neural networks with
  l-inf-dist neurons.
\newblock In \emph{ICML}, 2021.

\bibitem[Zhang et~al.(2020)Zhang, Chen, Xiao, Gowal, Stanforth, Li, Boning, and
  Hsieh]{zhang2019stable}
Zhang, H., Chen, H., Xiao, C., Gowal, S., Stanforth, R., Li, B., Boning, D.,
  and Hsieh, C.-J.
\newblock Towards stable and efficient training of verifiably robust neural
  networks.
\newblock In \emph{ICLR}, 2020.

\bibitem[Zhang et~al.(2018)Zhang, Isola, Efros, Shechtman, and
  Wang]{zhang2018unreasonable}
Zhang, R., Isola, P., Efros, A.~A., Shechtman, E., and Wang, O.
\newblock The unreasonable effectiveness of deep features as a perceptual
  metric.
\newblock In \emph{CVPR}, 2018.

\end{thebibliography}
\bibliographystyle{icml2022}

\newpage
\appendix
\onecolumn
The appendix includes the missing proofs from the paper (App. \ref{app:proof-trivlb} to App. \ref{app:lpips}), results for $\ell_2$-NPC for CIFAR10 in App. \ref{app:cifar} and comparison to orthogonal convolutions in~\ref{app:SOC}.
\section{Proof of Theorem \ref{thm:trivlb}}\label{app:proof-trivlb}
\begin{proof}
We note that for any $x$ it holds by the triangle inequality
\[ d(x,w_i) \leq d(z,x) + d(w_i,z).\]
Thus it holds
\[ d(x,w_i) - d(x,w_j) \leq d(z,w_i) + d(x,z) - d(z,w_j) + d(x,z),\]
and we get that all points in $B_d(z,r)$ are classified the same as $z$ if
\begin{align*}
 \maxop_{x \in B_d(z,r)} \Big(\minop_{i \in I_y}  d(x,w_i) -  \minop_{j \in I^c_y}d(x,w_j)\Big)\leq  \minop_{i \in I_y}  d(z,w_i) - \minop_{j \in I^c_y}d(z,w_j) + 2r \leq 0\\ 
 \end{align*}
 This yields that
 \[ r \leq \frac{\minop_{j \in I^c_y} d(z,w_j)-\minop_{i \in I_c} d(z,w_i)}{2}.\]
\end{proof}


\section{Proof of Theorem \ref{th:l2-certificate}}
\begin{proof}
We define the set
\[ U^{(p)}_j=\{ x \in \R^n \,|\, \norm{x-w_i}_p - \norm{x-w_j}_p \geq 0 \quad \forall \, i \in I_y\}.\]
as the set of points which are not classified as $y$ when only the single prototype with index $j \in I_y^c$ would be considered. 
We get the full set of points not classified as $y$ as
the union $\bigcup_{j \in I_y^c} U^{(p)}_j$. 
We define $r^q_p(z)_j=\min_{x \in U^{(p)}_j} \norm{z-x}_q$ as the radius of the largest $\ell_q$-ball which still fits
into $\R^d \backslash U^{(p)}_j$ and thus is fully classified as class $y$ when only considering $j \in I_y^c$. Thus the radius $\epsilon^q_p(z)$ of the largest $\ell_q$-ball fitting into
$\R^d \backslash \bigcup_{j \in I_y^c} U^{(p)}_j = \bigcap_{j \in I_y^c} \Big( \R^d \backslash U^{(p)}_j\Big)$ is given by
\[\epsilon^q_p(z)=\minop_{j \in I_y^c} r^q_p(z)_j,\]
which can be seen using the fact that $r^q_p(z)_j$ is the minimal $\ell_q$-distance to $U_j^{(p)}$.
\end{proof}

\section{Proof of Lemma \ref{lem:minmaxrho}}

\begin{proof}
As for each $i \in I_y$ the problem for $\rho^q_p(z)_{i,j}$ is a relaxation of the problem for $r^q_p(z)_j$ (as we are omitting constraints), it holds for each $i \in I_y$:
\[ r^q_p(z)_j \geq \rho^q_p(z)_{i,j} \;\Longrightarrow\; r^q_p(z)_j \geq \maxop_{i \in I_y}\rho^q_p(z)_{i,j}.\]
Thus 
\[ \epsilon^p_q(z)=\min_{j \in I^c_y} r^q_p(z)_j \; \geq \; \minop_{j \in I^y_c}\maxop_{i \in I_y}\rho^q_p(z)_{i,j}.\]

For the second part if $x^*$ satisfies 
\[ \norm{x^*-w_i}_p - \norm{x^*-w_{j^*}}_p \geq 0 \quad \forall i \in I_y,\]
then it is a feasible point for the optimization problem of $r^q_p(z)_{j^*}$ in \eqref{eq:rqp} and thus $\rho^q_p(z)_{i^*,j^*}=r^q_p(z)_{j^*}$. By definition and by the just derived result it holds
\[ \rho^q_p(z)_{i^*,j^*}=r^q_p(z)_{j^*}\geq \epsilon^q_p(z) \geq \rho^q_p(z)_{i^*,j^*},\]
and thus equality has to hold.
\end{proof}
\section{Proof of Theorem \ref{th:betterbound}}

\begin{lemma}\label{lem:trivos}
$\rho_p^p(z)_{i,j} \geq \frac{\norm{w_j-z}_p - \norm{w_i-z}_p}{2}$
\end{lemma}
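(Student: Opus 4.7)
The plan is to bound $\norm{x-z}_p$ from below for any point $x$ that is feasible in the definition of $\rho^p_p(z)_{i,j}$, using only the triangle inequality. Since the feasibility constraint is $\norm{x-w_i}_p \geq \norm{x-w_j}_p$, the natural strategy is to relate $\norm{x-w_i}_p$ and $\norm{x-w_j}_p$ to the distances $\norm{z-w_i}_p$, $\norm{z-w_j}_p$, and $\norm{x-z}_p$ via triangle inequalities, then solve the resulting inequality for $\norm{x-z}_p$.

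Concretely, I would first apply the triangle inequality in the form $\norm{x-w_i}_p \leq \norm{x-z}_p + \norm{z-w_i}_p$ to upper bound the left-hand side of the feasibility constraint, and then the reverse triangle inequality $\norm{x-w_j}_p \geq \norm{z-w_j}_p - \norm{x-z}_p$ to lower bound the right-hand side. Chaining these two inequalities with the constraint $\norm{x-w_i}_p \geq \norm{x-w_j}_p$ yields
\[ \norm{x-z}_p + \norm{z-w_i}_p \;\geq\; \norm{z-w_j}_p - \norm{x-z}_p, \]
which rearranges to $\norm{x-z}_p \geq \tfrac{1}{2}(\norm{w_j-z}_p - \norm{w_i-z}_p)$. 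Since this bound holds for every feasible $x$, it also holds for the minimizer, proving the lemma.

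There is essentially no obstacle here: the argument is just the standard halving trick that appears in Theorem~\ref{thm:trivlb}, applied to a single prototype pair rather than to the min over all prototypes. In fact, this lemma is effectively a restriction of Theorem~\ref{thm:trivlb} to the two-prototype subproblem that defines $\rho^p_p(z)_{i,j}$, so one could alternatively derive it by simply invoking Theorem~\ref{thm:trivlb} applied to the classifier consisting only of the prototypes $w_i$ (of the correct class) and $w_j$ (of the wrong class). The only minor point to note is that the bound is vacuous (since $\rho^p_p(z)_{i,j} \geq 0$ anyway) when $\norm{w_j-z}_p < \norm{w_i-z}_p$, so no $\max\{0,\cdot\}$ is needed on the right-hand side of the stated inequality.
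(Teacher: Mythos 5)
Your proposal is correct and follows essentially the same argument as the paper: bound $\norm{x-w_i}_p$ from above and $\norm{x-w_j}_p$ from below via the triangle inequality, combine with the feasibility constraint, and rearrange to get $\norm{x-z}_p \geq \tfrac{1}{2}\left(\norm{w_j-z}_p - \norm{w_i-z}_p\right)$ for every feasible $x$, hence for the minimizer.
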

\begin{proof}
First we restate the definition of $\rho$:
 
\begin{align}
 \rho^q_p(z)_{i,j}=\minop_{x \in \R^d} & \quad\norm{x-z}_q \\
	 \textrm{sbj. to:} & \quad \norm{x-w_i}_p - \norm{x-w_j}_p \geq 0 \nonumber\\
	                   & x \in \X \nonumber
\end{align}

We consider $p=q$. By the triangle inequality the following holds for any $x \in \X$, thus also for any adversarial perturbation $x$ for which $\norm{x-w_i}_p - \norm{x-w_j}_p \geq 0$:
\begin{equation}
    \begin{aligned}
\norm{x-w_i}_p \leq \norm{x-z}_p + \norm{z-w_i}_p &\\
\norm{z-w_j}_p \leq \norm{x-z}_p + \norm{x-w_j}_p  & \quad \Longrightarrow \quad \norm{x-w_j}_p \geq \norm{z-w_j}_p - \norm{x-z}_p
\end{aligned}
\end{equation} 
Summing the inequalities up we get for any feasible $x \in \X$ satisfying the inequality constraint, \begin{equation}
\begin{aligned}
    \norm{z-w_i}_p - \norm{z-w_j} + 2\norm{x-z}_p &\geq \norm{x-w_i}_p - \norm{x-w_j}_p \geq 0.
 \end{aligned}
 \end{equation}
 which yields finally
 \begin{equation}
 \begin{aligned}
    \norm{x-z}_p \geq \frac{\norm{w_j-z}_p - \norm{w_i-z}_p}{2}.
\end{aligned}
\end{equation}
Therefore, $\rho_p^p(z)_{i,j} \geq \frac{\norm{w_j-z}_p - \norm{w_i-z}_p}{2}$.
\end{proof}

\begin{proof}[Proof of Theorem~\ref{th:betterbound}]
If $z$ is misclassified, then it reduces to $0 \geq 0$ which holds. Otherwise,
by Lemma~\ref{lem:trivos}, it holds $ \rho^p_p(z)_{i,j} \geq \frac{\norm{z-w_j}_p - \norm{z-w_i}_p}{2}$.
Then 
\begin{align*}
     \minop_{j \in I^y_c}\maxop_{i \in I_y} \rho^p_p(z)_{i,j} &\geq \minop_{j \in I^y_c} \rho^p_p(z)_{i^*, j}\\ &\geq \minop_{j \in I^y_c} \frac{\norm{z-w_j}_p - \norm{z-w_{i^*}}_p}{2} \\&= \frac{\minop_{j \in I_y^c} \norm{z-w_j}_p-\minop_{i \in I_y}\norm{z-w_i}_p}{2}
\end{align*}

We further show that there are cases where the inequality is strict. Consider a $d$-dimensional example where $z=(0,\dots,0)$, $\{w_j \mid j \in I^y_c\} = \{(2,0,\dots,0)\}$, $\{w_i \mid  i \in I^y\} = \{(1,0,\dots,0)\}$. It clearly holds that $\minop_{j \in I^y_c}\maxop_{i \in I_y} \rho^p_p(z)_{i,j} = 1.5$, while $\max\left\{0, \frac{\minop_{j \in I_y^c} \norm{z-w_j}_p-\minop_{i \in I_y}\norm{z-w_i}_p}{2}\right\} = 1$ for any $p$.

\end{proof}


\section{Proof of Theorem~\ref{thm:rhocomplex}}

\textbf{Theorem~\ref{thm:rhocomplex}}\emph{
The computational complexities of optimization problems $\rho_p^q(z)_{i,j}$ for $p,q \in \{1,2,\infty\}$ for $\X=\R^d$ are summarised in Table~\ref{tab:hardness_rho_app}.}
\begin{table}
\begin{center}
\begin{tabular}{c|c| c | c| c|} 
 & & \multicolumn{3}{c|}{$\ell_q$-threat model}\\
 \cline{2-5}
\multirow{4}{*}{ \begin{turn}{90}\hspace{+1mm} $\ell_p$-distance \end{turn}}& & $\ell_1$ & $\ell_2$ & $\ell_\infty$\\ 
\cline{2-5}
 & $\ell_1$ & NP-hard & NP-hard & $O(d \log(d))$ \\ 
 \cline{2-5}
  & $\ell_2$ & $\Theta(d)$ & $\Theta(d)$ & $\Theta(d)$ \\
 \cline{2-5}
 & $\ell_\infty$ & $\Theta(d)$ &  $O(d \log(d))$& $\Theta(d)$ \\
\cline{2-5}
\end{tabular}
\end{center}
\caption{
Computational complexity of $\rho_p^q(z)_{i,j}$.} \label{tab:hardness_rho_app} 
\end{table}

Throughout the proof, we assume $z$ is correctly classified, otherwise the solution is $0$.
We prove the theorem gradually for cases $p=2$ and any $q$, then $q=\infty$ and any $p$, then $p=1$ and any $q\neq \infty$ and finally $p=\infty$, $q=1,2$. For most of the cases,  we discuss the possibility of incorporating box constraints, which usually increases complexity from $O(d)$ to $O(d\log(d))$. We also remark that using the median of medians algorithm, one could avoid sorting coordinates, and could achieve $\Theta(d)$ complexities. We, for the sake of simplicity, will be sorting point for the price of $log(d)$ factor in complexity.

\begin{proof}[\textbf{Proof for case $p=2$ and any $q$}]
\begin{align}
 \rho^q_2(z)_{i,j}=\minop_{x \in \R^d} & \quad\norm{x-z}_q \\
	 \textrm{sbj. to:} & \quad \norm{x-w_i}_2 - \norm{x-w_j}_2 \geq 0 \nonumber\\
	                   & x \in \X \nonumber
\end{align}
We equivalently rewrite the constraint in the following way:

\begin{equation*}
    \begin{aligned}
     \norm{x-w_i}_2 - \norm{x-w_j}_2 &\geq 0, \\
     \norm{x - z + z -w_i}^2_2 - \norm{x -z + z-w_j}^2_2 &\geq 0,\\
     \norm{x-z}_2^2 + 2\inner{x-z, z-w_i} + \norm{z-w_i}^2_2 - \left(\norm{x-z}_2^2 + 2\inner{x-z, z-w_j} + \norm{z-w_j}^2_2\right) &\geq 0,\\
     2\inner{x-z, w_j - w_i} &\geq \norm{z - w_j}_2^2 - \norm{z-w_i}_2^2, \\
     2\norm{x-z}_q\norm{w_j-w_i}_{\frac{q}{q-1}} \geq 2\inner{x-z, w_j - w_i} &\geq \norm{z - w_j}_2^2 - \norm{z-w_i}_2^2, \\
    \norm{x-z}_q &\geq \frac{\norm{z - w_j}_2^2 - \norm{z-w_i}_2^2}{2\norm{w_j-w_i}_{\frac{q}{q-1}}}.
    \end{aligned}
\end{equation*}
Since Hölder's inequality is tight, we conclude  

\begin{equation}\label{eq:l2-lb}
    \rho^q_2(z)_{i,j}= \frac{\norm{z - w_j}_2^2 - \norm{z-w_i}_2^2}{2\norm{w_j-w_i}_{\frac{q}{q-1}}}.
\end{equation}
We note that analogical derivation holds for minimising $\norm{x-z}$ in any norm, not just for the $q$-norm. In that case, $\norm{\cdot}_{\frac{q}{q-1}}$ is replaced with the dual norm of the considered norm. 
The box-constrained version of this problem can be solved
in $O(d \log d)$, see e.g., Section $4$ of ~\cite{HeiAnd2017}.
\end{proof}

\begin{proof}[\textbf{Proof for case $p=q=\infty$}]

\begin{align}\label{prob:rhoinfty}
 \rho^\infty_p(z)_{i,j}=\minop_{x \in \R^d} & \quad\norm{x-z}_\infty \\
	 \textrm{sbj. to:} & \quad \norm{x-w_i}_\infty - \norm{x-w_j}_\infty \geq 0 \nonumber\\
	                   & x \in \X \nonumber
\end{align}

We note that whenever $\norm{x-w_i}_\infty - \norm{x-w_j}_\infty \geq 0$, then also $\norm{x'-w_i}_\infty - \norm{x'-w_j}_\infty \geq 0$, where $x'^{(l)} = x^{(l)}+\alpha\sign(w_j^{(l)} - w_i^{(l)})$ for any positive $\alpha$ and some $l = 1 \dots d$, and $x'^{(l)} = x^{(l)}$ for the coordinates. That is, we can move $x^{(l)}$ in the direction from $w_i^{(l)}$ to $w_j^{(l)}$, since if $|x'^{(l)}-w_j^{(l)}| > |x^{(l)}-w_j{(l)}| $, then also  $|x'^{(l)} - w_i^{l}| >|x'^{(l)}-w_j{(l)}|$. On the other hand, if $|x^{(l)}-w_i^{(l)}| > |x'^{(l)} -w_i^{(l)}|$, then  also $|x^{(l)}-w_i^{(l)}| < |x^{(l)} - w_j^{(l)}|$, thus $l$ was not the maximising index of $\norm{x-w_i}_\infty$, and consequently $\norm{x-w_i}_\infty = \norm{x'-w_i}_\infty$. The remaining case is trivial; thus, $\norm{x'-w_i}_p - \norm{x'-w_j}_p \geq 0$. This argument may be repeated $d$ times to conclude that when $\rho_\infty^\infty(z)_{i,j} = \epsilon$, then a minimizer of Problem \ref{prob:rhoinfty} is $x^* = z + \epsilon\sign{(w_j - w_i)}$. 

Therefore, the problem is to find the smallest $\epsilon$ for which $\norm{z+\epsilon \sign(w_j-w_i)-w_i}_\infty \geq \norm{z+\epsilon\sign(w_j-w_i)-w_j}_\infty$.  Note that \[\norm{z\!+\!\epsilon\sign(w_j\!-w_i\!)\!-\!w_j}_\infty = \maxop_{l= 1,\dots,d}{\max\left\{z^{(l)}+\epsilon\sign\left(w_j^{(l)}-w_i^{(l)}\right)-w_j^{(l)},\!-\!\left(z^{(l)}+\epsilon\sign\left(w_j^{(l)}-w_i^{(l)}\right)-w_j^{(l)}\right)\!\right\}};
\]thus, it is a maximum of $2d$ linear functions, each of which has slope either $1$, or $-1$. Let $\alpha_i = \argmin{\sign(w_j-w_i)(z-w_i)} $ and $\beta_i = \argmax{\sign(w_j-w_i)(z-w_i)} $, analogously for $\alpha_j$, $\beta_j$. Then 

\begin{align*}
    &\norm{z+\epsilon \sign(w_j-w_i)-w_i}_\infty -\norm{z+\epsilon\sign(w_j-w_i)-w_j}_\infty = \\
    &\max\left\{-\epsilon -  \sign\left(w_j^{(\alpha_i)}-w_i^{(\alpha_i)}\right) \left(z^{(\alpha_i)} - w_i^{(\alpha_i)}\right),  \epsilon+  \sign\left(w_j^{(\beta_i)}-w_i^{(\beta_i)}\right) \left(z^{(\beta_i)} - w_i^{(\beta_i)}\right)\right\} - \\
        &\max\left\{-\epsilon -  \sign\left(w_j^{(\alpha_j)}-w_i^{(\alpha_j)}\right) \left(z^{(\alpha_j)} - w_j^{(\alpha_j)}\right),  \epsilon + \sign\left(w_j^{(\beta_j)}-w_i^{(\beta_j)}\right) \left(z^{(\beta_j)} - w_j^{(\beta_j)}\right)\right\}.
\end{align*}
Moreover, we can analyse to slope of $\norm{z+\epsilon \sign(w_j-w_i)-w_i}_\infty -\norm{z+\epsilon\sign(w_j-w_i)-w_j}_\infty$ and see that it is non-zero only in the interval between points
\[ \epsilon_i = \frac{-\sign\left(w_j^{(\alpha_i)}-w_i^{(\alpha_i)}\right) \left(z^{(\alpha_i)} - w_i^{(\alpha_i)}\right) - \sign\left(w_j^{(\beta_i)}-w_i^{(\beta_i)}\right) \left(z^{(\beta_i)} - w_i^{(\beta_i)}\right))}{2},\]
and 

\[ \epsilon_j = \frac{-\sign\left(w_j^{(\alpha_j)}-w_i^{(\alpha_j)}\right) \left(z^{(\alpha_j)} - w_j^{(\alpha_j)}\right) - \sign\left(w_j^{(\beta_j)}-w_i^{(\beta_j)}\right) \left(z^{(\beta_j)} - w_j^{(\beta_j)}\right))}{2},\]
where the slope is $2$. Now it is easy to compute the value of $\norm{z+\epsilon \sign(w_j-w_i)-w_i}_\infty -\norm{z+\epsilon\sign(w_j-w_i)-w_j}_\infty$ for very big ($V_{+}$) and very small ($V_{-}$) values of $\epsilon$, where the active linear function is the one with negative slope. Concretely

\[ 
V_{-} =  \sign\left(w_j^{(\alpha_j)}-w_i^{(\alpha_j)}\right) \left(z^{(\alpha_j)} - w_j^{(\alpha_j)}\right) -  \sign\left(w_j^{(\alpha_i)}-w_i^{(\alpha_i)}\right) \left(z^{(\alpha_i)} - w_i^{(\alpha_i)}\right),
\]
\[ 
V_{+} =  \sign\left(w_j^{(\beta_i)}-w_i^{(\beta_i)}\right) \left(z^{(\beta_i)} - w_i^{(\beta_i)}\right) - \sign\left(w_j^{(\beta_j)}-w_i^{(\beta_j)}\right) \left(z^{(\beta_j)} - w_j^{(\beta_j)}\right).
\]

Now we use the fact that the slope is $2$ between $\epsilon_i$ and $\epsilon_j$ to find the point where the norms are equal; Thus, we can express $\rho_\infty^\infty(z)_{i,j}$ as 

\[\rho_\infty^\infty(z)_{i,j} = \max\{\epsilon_i, \epsilon_j\} - \frac{V_{+}}{2}, \]

or as 

\[\rho_\infty^\infty(z)_{i,j} = \min\{\epsilon_i, \epsilon_j\} - \frac{V_{-}}{2}. \]

We can take the mean of both expression, then we arrive at
\[\rho_\infty^\infty(z)_{i,j} = \epsilon_i + \epsilon_j - \frac{V_{-} + V_{+}}{2}, \] which simplifies to

\[\rho_\infty^\infty(z)_{i,j} =-\frac{\sign\left(w_j^{(\alpha_j)}-w_i^{(\alpha_j)}\right) \left(z^{(\alpha_j)} - w_j^{(\alpha_j)}\right) +\sign\left(w_j^{(\beta_i)}-w_i^{(\beta_i)}\right) \left(z^{(\beta_i)} - w_i^{(\beta_i)}\right) }{2},\]

and by substituting back the definitions of $\alpha_j, \beta_i$:

\begin{equation}\label{eq:linf-linf}
\rho_\infty^\infty(z)_{i,j} = \frac{\maxop_{l = 1,\dots,d}{-\sign\left(w_j^{(l)} - w_i^{(l)}\right)\left(z^{(l)} - w_j^{(l)}\right)} - \maxop_{l = 1,\dots,d}{\sign\left(w_j^{(l)} - w_i^{(l)}\right)\left(z^{(l)} - w_i^{(l)}\right)} }{2}.
\end{equation}\

\end{proof}
\begin{proof}[\textbf{Proof for case $q=\infty$, $p \neq \infty$ }]

The value of $\rho^\infty_p(z)_{i,j}$ is the minimal non-negative $\epsilon$ for which the following maximization  problem has non-negative value.

\begin{align}
 \maxop_{x \in \R^d} & \quad \norm{x-w_i}^p_p - \norm{x-w_j}^p_p  \\
	 \textrm{sbj. to:} & \quad \norm{x-z}_\infty \leq \epsilon \nonumber\\
	                   & x \in \X \nonumber
\end{align}

It can be decomposed into $d$ independent problems indexed by $l$.
\begin{align}
 \maxop_{x^{(l)} \in \R} & \quad |x^{(l)}-w^{(l)}_i|^p - |x^{(l)}-w^{(l)}_j|^p  \\
	 \textrm{sbj. to:} & \quad |x^{(l)}-z^{(l)}| \leq \epsilon \nonumber\\
	                   & x^{(l)} \in \X^{(l)} \nonumber
\end{align}
Derivative of the objective function w.r.t.  $x^{(l)}$  is $p|x^{(l)}-w^{(l)}_i|^{p-1}\sign{(x^{(l)}-w_i^{(l)})} - p|x^{(l)}-w^{(l)}_j|^{p-1}\sign{(x^{(l)}-w_j^{(l)})}$, which is non-zero whenever $w_i^{(l)} \neq w_j^{(l)}$. Thus, the maximum is attained at a point where a constraint is active, and the value of the problem is $|z^{(l)}+\epsilon\sign{(w_j^{(l)} - w_i^{(l)})} -w_i^{(l)}|^p - |z^{(l)} + \epsilon\sign{(w_j^{(l)} - w_i^{(l)})} - w_j^{(l)}|^p$.
When $p=2$, the value of the objective is a quadratic function in $\epsilon$; thus, the value of the original objective is also a quadratic function in $\epsilon$ and we can easily obtain a solution to the original problem. For the sake of completeness, we show that this approach results in the same $\rho^\infty_2(z)_{i,j}$ as we derived before:

\begin{equation}
    \begin{aligned}
     \sum_{l=1}^d \left(\left(z^{(l)}+\epsilon\sign{(w_j^{(l)} - w_i^{(l)})} -w_i^{(l)}\right)^2 - \left(z^{(l)} + \epsilon\sign{(w_j^{(l)} - w_i^{(l)})} - w_j^{(l)}\right)^2\right) &\geq 0, \\ 
     \sum_{l=1}^d \left( (z^{(l)} - w_i^{(l)})^2  -(z^{(l)} - w_j^{(l)})^2  + 2\epsilon \sign{(w_j^{(l)} - w_i^{(l)})}(w_j^{(l)} - w_i^{(l)})   \right)
     &\geq 0,\\
    \norm{z-w_i}_2^2 - \norm{z-w_j}_2^2 + 2\epsilon \norm{w_j-w_i}_1 & \geq 0,\\
    \epsilon \geq \frac{\norm{z-w_j}_2^2 - \norm{z-w_i}_2^2}{2\norm{w_j-w_i}_1}.
    \end{aligned}
\end{equation}

If $p=1$, the value of the objective is piecewise linear and non-decreasing; thus, the original objective is again, piecewise linear and non-decreasing. Then we can order the breaking points and find the smallest admissible $\epsilon$ for the original problem using binary search. Note that the objective is maximised not just in the aforementioned case, but also when 
\begin{equation}
  x^{(l)} =\begin{cases}
    w_j^{(l)}, & \text{if $|z^{(l)} - w_j^{(l)}| \leq \epsilon$}. \\
    z_j^{(l)} + \epsilon \sign\left(w_j^{l}-z^{l}\right), & \text{otherwise}.
\end{cases} \end{equation}

For other values of $p$, it may be difficult to solve the problem exactly. However, as we have already shown, it is easy ($\Theta(d)$) to determine if $\rho^\infty_p(z)_{i,j} > \epsilon$ given an $\epsilon$, thus the problem can be solved approximately using binary search for any $p$ with  logarithmic complexity in accuracy.

To conclude the cases $\rho_\infty^p(z)_{i,j}$, we discuss the addition of box constraints. As we have shown, a minimizer of the problems is always $x^* = z + \epsilon\sign{(w_j - w_i)}$, and identical arguments would suggest that with box constraints, it would hold that $x^* = \max(0, \min(1,z + \epsilon\sign{(w_j - w_i)}))$. Therefore, given a radius, certification is done in $O(d)$ even with box constraints. Otherwise, we would need to either order coordinates according to value of $\epsilon$ when a box constraint for $x^* = \max(0, \min(1,z + \epsilon\sign{(w_j - w_i)}))$ becomes active, and then perform a binary search over the constrained problems. This adds a $log(d)$ factor to the complexity. Note that for the case $p=1$ we are already performing a binary search, so we do them at once. Or we can do a binary search over $\epsilon$ to find a minimal one which causes 
\begin{equation}\label{eq:linf_linf_box}
x = \max(0, \min(1,z + \epsilon\sign{(w_j - w_i)}))
\end{equation}
to be misclassified.

\end{proof}
\begin{proof}[\textbf{Proof for case $p=1$, $q\neq\infty$}]
We ave already discussed the case of $\rho_1^\infty(z)_{i,j}$, so it is omitted here. For all other values of $q$, we show its NP-hardness by reducing the knapsack problem to the decision version of problem if given $\epsilon > 0$, $\rho_1^\infty(z)_{i,j} \leq \epsilon$.

\begin{theorem}[Knapsack] The following problem is NP-complete.\ \\
  Given vectors $w, p \in \mathbb{N}^n$ and constants $W,P$. Decide if there is a vector $x \in \{0,1\}^n$ such that $\inner{p,x} \geq P$ and $\inner{w,x} \leq W$.
\end{theorem}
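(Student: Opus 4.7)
The plan is to verify the two standard ingredients of NP-completeness: membership in NP and NP-hardness via polynomial-time reduction from a known NP-complete problem. Membership in NP is immediate: a candidate vector $x \in \{0,1\}^n$ serves as a polynomial-size certificate, and the two inner products $\inner{p,x}$ and $\inner{w,x}$ can be computed and compared to $P$ and $W$ in time polynomial in the bit-length of the input.

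For NP-hardness, I would reduce from \textbf{Subset Sum}, which is well-known to be NP-complete: given positive integers $a_1,\ldots,a_n$ and a target $T$, decide whether some $S \subseteq \{1,\ldots,n\}$ satisfies $\sum_{i \in S} a_i = T$. The reduction sets $n$ (the dimension of the knapsack instance) equal to the number of Subset Sum elements, puts $w_i = p_i = a_i$ for every $i$, and sets $W = P = T$. This transformation is clearly computable in polynomial time in the input size.

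The correctness argument is the easy part: if $S$ solves the Subset Sum instance, its indicator vector $x$ satisfies $\inner{p,x} = \inner{w,x} = T$, hence $\inner{p,x} \geq P$ and $\inner{w,x} \leq W$. Conversely, any feasible $x$ for the constructed knapsack instance satisfies $\inner{a,x} \geq T$ and $\inner{a,x} \leq T$ simultaneously, forcing equality and giving a Subset Sum witness. The main conceptual point to state carefully is that Subset Sum is indeed NP-complete; this is a classical result (Karp, 1972), which I would cite rather than reprove. With both directions established and the reduction polynomial, the Knapsack decision problem as stated is NP-complete.
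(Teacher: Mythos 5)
Your proof is correct: membership in NP is immediate, and the reduction from Subset Sum (setting $w_i = p_i = a_i$ and $W = P = T$, which forces $\inner{a,x} = T$ for any feasible $x$) is the standard, valid argument. The paper itself states this theorem as a classical known result (Karp, 1972) without proof and only uses it as the source problem for its own reduction, so your argument simply supplies the standard justification.
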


For the sake of clarity, we use $u,v$ instead of $w_i, w_j$ to get rid of unnecessary subscript.
 Let $w,p,W,P$ describe an instance of the knapsack problem.
Let a pair of prototypes $u_t,v_t \in \mathbb{R}^{n+2}$  be defined in the following way for some real $t$ and  $l = 1, \dots, n$
\begin{equation}
  \begin{aligned}
    u^{(l)}_t &= \sqrt[q]{w^{(l)}}, \\
    v^{(l)}_t &= \sqrt[q]{w^{(l)}} - \frac{p^{(l)}}{t},
  \end{aligned}
\end{equation}
let also 
\begin{equation}
  \begin{aligned}
    u^{(n+1)}_{t} &=  \sqrt[q]W + \frac{\max\left(0, \left( 2P -\sum_{i=1}^n p^{(i)}\right)\right)}{t},\\
    v^{(n+1)}_{t} &=  \sqrt[q]W, \\
    u^{(n+2)}_{t} &=  \sqrt[q]W,\\
    v^{(n+2)}_{t} &=  \sqrt[q]W +\frac{\max\left(0, \left(\sum_{i=1}^n p^{(i)}-2P\right)\right)}{t},\\
  \end{aligned}
\end{equation}
and $\epsilon = \sqrt[q]{W}$. 
Now we show that whenever there is an allocation $x \in \{0,1\}^n$ such that $\inner{p,x} \geq P$ and $\inner{w,x} \leq W$, then $\rho_1^q(0) \leq \epsilon$ for any sufficiently large $t$ such that the first $n$ components of $v^{(t)}$ are positive.
It holds that: 
\begin{equation}
  \norm{v_t}_1 \leq \sum_{i=1}^n \sqrt[q]{w^{(i)}} - \sum_{i=1}^n \frac{p^{(i)}}{t} + 2 \sqrt[q]{W} + \frac{\max\left(0, \left(\sum_{i=1}^n p^{(i)}-2P\right)\right)}{t} \leq\sum_{i=1}^n \sqrt[q]{w^{(i)}} + 2\sqrt[q]{W} \leq \norm{u_t}_1.
\end{equation}

Consider the following point
\begin{equation}
  \delta^{(k)} =\begin{cases}
    \sqrt[q]{w^{(k)}}, & \text{if $x^{(k)} = 1$}. \\
    0, & \text{otherwise}.
  \end{cases} \end{equation} 

It has $q$-norm of at most $\epsilon$:
\begin{equation}
  \norm{\delta}_q = \left(\sum_{i=1}^{n+2} \delta^{(i)q}\right)^\frac{1}{q} = \left(\sum_{i=1}^{n} x^{(i)}\cdot w^{(i)}\right)^\frac{1}{q} \leq \sqrt[q]{W} = \epsilon.
\end{equation}

Also it holds that 
\begin{equation}
  \begin{aligned}
  \norm{v_t-\delta}_1 &= \sum_{i=1}^n \left(x^{(i)}\cdot \frac{p^{(i)}}{t} + (1-x^{(i)})\sqrt[q]{w^{(i)}} \right) + 2 \sqrt[q]{W} + \frac{\max\left(0, \left(\sum_{i=1}^n p^{(i)}-2P\right)\right)}{t}, \\
   &\geq \sum_{i=1}^n (1-x^{(i)})\sqrt[q]{w^{(i)}}+ 2 \sqrt[q]{W}  + \frac{P+\max\left(0, \left(\sum_{i=1}^n p^{(i)}-2P\right)\right)}{t},  \\
 &\geq \sum_{i=1}^n (1-x^{(i)})\sqrt[q]{w^{(i)}}  + 2 \sqrt[q]{W} + \frac{\sum_{i=1}^n p^{(i)} - P + \max\left(0, \left( 2P -\sum_{i=1}^n p^{(i)}\right)\right)}{t}\geq \norm{u_t- \delta}_1.
  \end{aligned}
\end{equation}

Therefore, $\rho_1^q(0) \leq \epsilon$.

Now we move on to the second direction; we show that whenever the constructed problem is feasible, then also the knapsack problem is feasible. 

Let there be a $\delta$ such that $\norm{\delta}_q \leq \epsilon$ and $\norm{v_t - \delta} \geq \norm{u_t - \delta}$. Then we can WLoG assume $\delta^{(n+1)} = 0$, and $\sqrt[q]{w^{(l)}} - p^{(l)}/t \leq \delta^{(l)} \leq \sqrt[q]{w^{(l)}}$ for $l = 1, \dots, n$.
Now consider the following allocation for $k = 1, \dots, n$.
\begin{equation}
  x^{(k)} =\begin{cases}
    0, & \text{if $\delta^{(k)} = 0$}. \\
    1, & \text{otherwise}.
  \end{cases} \end{equation}

We show that if $t$ is sufficiently large, then $x$ is a valid allocation. First, let us look at the $\inner{w,x} \leq W$ constraint; 
$$\sum_{i=1}^n \delta^{(i)q} = \sum_{i=1}^n  w^{i}\cdot x^{i} - o(1) = \inner{w,x} - o(1) \leq W;$$
thus, $\inner{w,x} \leq W$ for sufficiently large $t$.
  
For the other constraint, first note for $l = 1, \dots, n$:

\begin{equation}
  \left(|v_t - \delta|^{(l)} - |u_t-\delta|^{(i)}\right) = \begin{cases}
  p^{(i)}/t, & \text{if $x^{(i)} = 0$}. \\
  \geq -p^{(i)}/t , & \text{otherwise}.

  \end{cases}
\end{equation}

Then 

\begin{equation}
\sum_{i=1}^n \left(|v_t - \delta|^{(i)} - |u_t-\delta|^{(i)}\right) \geq \frac{\sum_{i=1}^n p^{(i)} - 2\inner{x,p}}{t},
\end{equation}

and finally

\begin{equation}
  \begin{aligned}
    \frac{\sum_{i=1}^n p^{(i)}  -2P}{t} &\geq \frac{\sum_{i=1}^n p^{(i)} - 2\inner{x,p}}{t},\\
      \inner{x,p} &\geq P.
  \end{aligned}
\end{equation}

\end{proof}

\begin{proof}[\textbf{Proof for case $p=\infty$, $q=1$}]
\begin{align}
 \rho^1_\infty(z)_{i,j}=\minop_{x \in \R^d} & \quad\norm{x-z}_1 \\
	 \textrm{sbj. to:} & \quad \norm{x-w_i}_\infty - \norm{x-w_j}_\infty \geq 0 \nonumber\\
	                   & x \in \X \nonumber
\end{align}

Let $\delta_x = x-z$ where $x \in \arg\min{\rho_\infty^1}(z)_{i,j}$,
We note that there exists $x$ such that $\delta_x$ contains only a single non-zero element. To see why, let there be some $\delta_x$ with multiple non-zero elements from which we construct $\delta_{x'}$ with more zeros such that  $x' \in \arg\min{\rho_\infty^1(z)_{i,j}} $ . Let $l^* = \arg\max_l |x^{(l)} - w_i^{(l)}|$. Take any index $k \neq l^*$ such that $\delta_x^{(k)} \neq 0$. Then consider a perturbation $\delta_x{'}$

\begin{equation}
  \delta_{x'}^{(l)} =  \begin{cases}
  \delta_x^{(l)} + |\delta_x^{(k)}|\sign(x^{(l)} - w_i^{(l)}), & \text{if $l  = l^*$}. \\
  0, & \text{if $l=k$}.\\
  \delta_x^{(l)} , & \text{otherwise}.
  \end{cases}
\end{equation}
Now, $\norm{x' - w_i}_\infty = \norm{x - w_i}_\infty + |\delta_{x}^{k}| \geq \norm{x - w_j}_\infty + |\delta^{(l)}| \geq \norm{\delta'-w_j}_\infty$ which concludes the argument. Now it is sufficient to solve the problem for every coordinate separately and take the maximal value; thus, the original problem is solved in linear time.

\end{proof}

\begin{proof}[\textbf{Proof for case $p=\infty$, $q=2$}]

\begin{align}
 \rho^2_\infty(z)_{i,j}=\minop_{x \in \R^d} & \quad\norm{x-z}_2 \\
	 \textrm{sbj. to:} & \quad \norm{x-w_i}_\infty - \norm{x-w_j}_\infty \geq 0 \nonumber\\
	                   & x \in \X \nonumber
\end{align}

Let $x$ be the minimizer. Then we split the proof into two cases. Either there is an index $l$ such that $\norm{x-w_i}_\infty = |x^{(l)} - w_i^{(l)}| = |x^{(l)} - w_j^{(l)} = \norm{x-w_j}_\infty$. In that case, $|z^{(l)}-w^{(l)}_j| > |z^{(l)}-w^{(l)}_i|$ and $|w^{(l)}_i - w^{(l)}_j|$ is maximal. Then we can compute $x$ in one pass and verify that indeed  $\norm{x-w_i}_\infty =  \norm{x-w_j}_\infty$.

Otherwise, let us Assume that we know $\norm{x-w_i}_\infty = \norm{x-w_j}_\infty= d$ for the optimal $x$. That is, for every coordinate $l$ we have to ensure that $|x^{(l)}- w^{(l)}_j|\leq d$, and also that there is a coordinate $k$ where $|x^{(l)} - w^{(l)}_i| = d$; thus, we can construct $x$ minimizing $\norm{x-z}_2$ as 

\begin{equation}
x^{(l)} = 
\begin{cases}
w_j^{(l)} + d\sign(z^{(l)}-w^{(l)}_i), &\text{if $|w_j^{(l)}-x^{(l)}|>d$}.\\
w^{(l)}_i + d\sign(w_j^{(l)} - w_i^{(l)}), &\text{if $l =k  $}.\\
z^{(l)}, &\text{otherwise},
\end{cases}
\end{equation}
where $k = \min \underset{l}{\arg\max} \quad  \sign\left(w_j^{(l)} -w_i^{(l)}\right)\left( z^{(l)} - w_i^{(l)}  \right)$.

Now we sort (so further we assume the array is sorted) the coordinates according to values of $|w_j^{(l)}-x^{(l)}|$. 

Then minimum of $\norm{x-z}^2_2$ is attained for some $d$ which lies in some interval $[|w_j^{(m)} - x^{(m)}|, |w_j^{(m+1)} - x^{(m+1)}|]$. Inside every such interval, $\norm{x-z}^2_2$ is a quadratic expression in $d$. For the $m$-th interval, the equation is

\[
\norm{x-z}_2^2 = \sum_{l=1}^{m} \left(z^{(l)} - w_j^{(l)} + d\sign(z^{(l)}-w^{(l)}_i)\right)^2 + \left(z^{(l)} -  \sign\left(w_j^{(k)} -w_i^{(k)}\right)\left( z^{(k)} - w_i^{(k)}\right)\right)^2.
\]
So we can minimize a quadratic function $\norm{x-z}_2^2$ over an interval $[|w_j^{(m)} - x^{(m)}|, |w_j^{(m+1)} - x^{(m+1)}|]$. We can also see that for the $m+1$-th equation, we only add one term to the $m$-th equation; thus, we can solve every interval in $O(1)$ and take the minimal $\epsilon$. Consequently, the time complexity is dominated by $O(d\log(d))$ needed for sorting which concludes the proof.

\end{proof}

\section{Proof of Theorem~\ref{thm:rcomplex}}

\textbf{Theorem \ref{thm:rcomplex}}\emph{
The computational complexities of optimization problems $r_p^q(z)_{i,j}$ in \eqref{eq:rqp} for $p,q \in \{1,2,\infty\}$ and $\X=[0,1]^d$ are summarized in Table~\ref{tab:hardness_eps_app}.}
\begin{table}
\begin{center}
\begin{tabular}{c|c| c | c| c|} 
  & & \multicolumn{3}{c|}{$\ell_q$-threat model }\\
 \cline{2-5}
\multirow{4}{*}{ \begin{turn}{90}\hspace{+1mm} $\ell_p$-distance \end{turn}}  & & $\ell_1$ & $\ell_2$ & $\ell_\infty$\\ 
 \cline{2-5}
& $\ell_1$ & NP-hard & NP-hard & Poly \\ 
 \cline{2-5}
 &$\ell_2$ & Poly & Poly & Poly \\
 \cline{2-5}
 &$\ell_\infty$ & NP-hard & NP-hard &NP-hard \\
\cline{2-5}
\end{tabular}
\end{center}
\caption{\label{tab:hardness_eps_app} Computational Complexity of $r^q(z)$.} 
\end{table}
\begin{proof}
The Problem $r_1^q(z)_j$ for $q \neq \infty$ cannot be easier than the problem $\rho_1^q(z)_{i,j}$, thus since the latter is NP-hard, the first also has to be NP-hard.
For the case $r_1^\infty(z)_j$, we recall that the optimal argument of $\rho_1^\infty(z)_{i,j}$ was in the form 
\begin{equation}
  x^{(l)} =\begin{cases}
    w_j^{(l)}, & \text{if $|z^{(l)} - w_j^{(l)}| \leq \epsilon$}, \\
    z_j^{(l)} + \epsilon \sign\left(w_j^{l}-z^{l}\right), & \text{otherwise},
\end{cases} \end{equation}
where $\epsilon$ is the value of $\rho_1^\infty(z)_{i,j}$. Therefore, $r_1^\infty(z)_j = \max_i \rho_1^\infty(z)_{i,j}$ and the overall complexity is $O(d\log(d)|I_c^y|) $. When $p =2$, then the problem reads as
\begin{align*}\label{eq:rqp}
   r_2^q(z)_j=\minop_{x \in \R^d} & \quad\norm{x-z}_q \\
	 \textrm{sbj. to:} & \quad \norm{x-w_i}_2 - \norm{x-w_j}_2 \geq 0 \quad \forall \, i \in I_y \nonumber\\
	                   & x \in [0,1]^d \nonumber
\end{align*}
which is a convex optimization problem for any $q$ and can be solved in polynomial time.

Finally, for the case $p = \infty$ we show that it is $NP-complete$ to solve the feasibility problem of $r_p^q(z)_j$, thus the problem is NP-hard for any $q$. To shorten the notation, we consider $I_y = 1,\dots, n$ and whenever we say that some proposition holds for $w_i$, then we mean it holds for any $w_i$, $i \in 1,\dots,n$.

We show this by reducing $3$-SAT to it. Let there be a formula in CNF $\bigwedge_{i=1}^n \left(\alpha^{(i)} \lor \beta^{(i)} \lor \gamma^{(i)}\right)$, where all all the literals are from a set of $v$ variables. For the sake of brevity, we make a correspondence between the literals and indices $1, \dots, v$. Also when literal corresponding to $i$ is negative, we will write it as $-i$. We will consider the following set of prototypes from $\mathbb{R}^{(v+1)}$.
\[w_j = (0,\dots,0,3) \]
\begin{equation*}
  w_i^{(l)} =\begin{cases}
    -1, & \text{ if $l \in \{\alpha^{(i)},  \beta^{(i)},  \gamma^{(i)}\}  $}, \\
    2, & \text{ if $-l \in \{\alpha^{(i)},  \beta^{(i)},  \gamma^{(i)}\}  $}, \\
    0, & \text{otherwise}.
\end{cases} \end{equation*}
Clearly, for any $x \in [0,1]^d$ it holds that $\norm{w_j-x}_\infty \geq 2$, and also $\norm{x-w_i}_\infty \leq 2$. Therefore, if $r_\infty^p(z)$ is feasible, then $\norm{x-w_i}_\infty = 2$, which is equivalent to proposition $\left(x^{(|\alpha_i|)} = \frac{1+\sign{\alpha_i}}{2}  \right) \lor \left( x^{(|\beta_i|)} = \frac{1+\sign{\beta_i}}{2} \right) \lor \left( x^{(|\gamma_i|)} = \frac{1+\sign{\gamma_i}}{2}\right)$. Such proposition have to be satisfied for every $i$, therefore it is equivalent to a formula in CNF  
\[ \bigwedge_{i=1}^n\left(\left(x^{(|\alpha_i|)} = \frac{1+\sign{\alpha_i}}{2}  \right) \lor \left( x^{(|\beta_i|)} = \frac{1+\sign{\beta_i}}{2} \right) \lor \left( x^{(|\gamma_i|)} = \frac{1+\sign{\gamma_i}}{2}\right)\right), \] 
which is clearly equisatisfiable with the original CNF formula; thus, the feasibility problem is NP-complete.
\end{proof}

\section{Proofs from the Perceptual Metric NPC}\label{app:lpips}
Here we slightly deviate from the main text, that we consider the squared objective which clearly is an equivalent problem 
\begin{align*}\label{eq:l2-opt-lb}
   \vv{\rho^{2}(z)_{i,j}}=\minop_{x \in \R^d} & \quad\norm{x-z}^2_2\\
	 \textrm{sbj. to:} & \quad \inner{x,w_j-w_i} + \frac{\norm{w_i}^2_2-\norm{w_j}^2_2}{2} \geq 0 \nonumber\\
	                   & \quad \norm{x^{(l)}}^2_2=r_l^2 \nonumber \\
	                   & \quad x \geq 0,
\end{align*}
where we use a shortcut $x^{(l)}$, instead of $x^{(h,w,l)}$, to simplify notation.
\begin{proof}[Proof of Proposition \ref{pro:lb-lpips}]
Note that 
\[ \norm{x-z}^2_2 = \sum_{l \in I_l} \norm{x^{(l)}-z^{(l)}}^2_2,\]
and as $\norm{z^{(l)}}_2=r_l$ and we have $\norm{x^{(l)}}_2=r_l$ as constraint, we can equivalently minimize $-\sum_{l \in I_L} \inner{x^{(l)},z^{(l)}}$ as objective.

\vv{Let $v = w_j-w_i$ and $b = \frac{\norm{w_i}^2_2-\norm{w_j}^2_2}{2} $.}
The Lagrangian of the non-convex problem (due to the quadratic \emph{equality} constraints) is
\begin{align*}
    \underset{\mu \geq 0}{L(x,\lambda,\alpha, \mu)} =-\sum_{l \in I_L} \inner{x^{(l)},z^{(l)}}
    + \lambda \Big(\sum_{l \in I_l} \inner{v^{(l)},x^{(l)}}+b\Big)  + \sum_{l \in I_L} \frac{\alpha_l}{2}\Big(\norm{x^{(l)}}^2_2 - r_l^2 \Big) - \sum_{l \in I_L} \inner{\mu^{(l)}, x^{(l)}} 
\end{align*}
We get as critical point condition:
\[ \nabla_{x^{(l)}} L = -z^{(l)} + \lambda v^{(l)} + \alpha_l x^{(l)}-\mu^{(l)}  =0,\]
which yields 
\[ x^{(l)}=\frac{1}{\alpha_l}\left(z^{(l)}-\lambda v^{(l)} + \mu^{(l)} \right).\]
The dual function $q(\lambda,\alpha, \mu)$ becomes
\begin{align*}
 q(\lambda,\alpha, \mu) = 
 &- \sum_{l \in I_L}\frac{1}{\alpha_l} \left(\norm{z^{(l)}}_2^2 -\lambda \inner{v^{(l)}, z^{(l)}} + \inner{\mu^{(l)}, z^{(l)}} \right) 
 \\&+ \lambda \left(\sum_{l\in I_L} \frac{1}{\alpha_l}  \left(\inner{v^{(l)},z^{(l)}} -\lambda \norm{v^{(l)}}_2^2 +\inner{v^{(l)}, \mu^{(l)}}\right) + b \right)\\ &+ \sum_{l\in I_L} \frac{1}{2\alpha_l} \left(\norm{z^{(l)}}_2^2 + \lambda^2\norm{v^{(l)}}_2^2 + \norm{\mu^{(l)}}_2^2 -2\lambda\inner{z^{(l)}, v^{(l)}} + 2\inner{z^{(l)}, \mu^{(l)}} -2\lambda\inner{v^{(l)}, \mu^{(l)}} \right) \\&-\sum_{l\in I_L}\frac{\alpha_l r_l^2}{2}
 -\sum_{l \in I_L} \frac{1}{\alpha} \left( \inner{\mu^{(l)}, z^{(l)}} - \lambda \inner{\mu^{(l)}, v^{(l)}} + \norm{\mu^{(l)}}_2^2\right),
\end{align*}
which simplifies to 
\[ q(\lambda,\alpha, \mu)=-\sum_{l \in I_L}\frac{1}{2\alpha_l}\norm{z^{(l)}-\lambda v^{(l)} + \mu^{(l)}}^2_2 + \lambda b - \sum_{l \in I_L} \frac{\alpha_l r_l^2}{2}.\]
We solve explicitly for $\alpha$ and get 
\[ \alpha_l = \frac{1}{r_l}\norm{z^{(l)} - \lambda v^{(l)} + \mu^{(l)}}_2.\]
Then we get
\[ q(\lambda, \mu)=-\sum_{l \in I_L} \norm{z^{(l)} - \lambda v^{(l)} + \mu^{(l)}}_2 r_l + \lambda b.\]

Solving explicitly for $\mu$, we get 

\[ q(\lambda) = - \sum_{l\in I_L} \norm{ \left(z^{(l)} - \lambda v^{(l)}\right)^+}_2r_l + \lambda b.
\] 
So this is a lower bound on $-\sum_{l \in I_L}\inner{x^{*(l)},z^{(l)}}$, where $x^*$ is the optimal primal variable by weak duality and thus going back to our actual objective we get using $\norm{x^{*(l)}}_2=\norm{z^{(l)}}=r_l$ that
\[ \norm{x^*-z}=\sqrt{\norm{x^*-z}^2_2} = \sqrt{2\sum_{l \in I_L} r_l^2 -2 \sum_{l \in I_L}\inner{x^{*(l)},z^{(l)}}}
\geq \sqrt{2\sum_{l \in I_L} r_l^2 +2 \left(\maxop_{\lambda \geq 0}- \sum_{l\in I_L} \norm{ \left(z^{(l)} - \lambda v^{(l)}\right)^+}_2r_l + \lambda b\right)},\]
where we have used weak duality. Now we go back to indexing using $h,w,l$ instead of just $l$. Since $r_l = \frac{1}{\sqrt{H_lW_l}}$, it holds that 
\[
\sum_{h=1,\dots,H_l}\sum_{w=1,\dots,W_l} r_l^2 =1; 
\]
thus, we can simplify the final expression as 
\[
\sqrt{2L +2 \left(\maxop_{\lambda \geq 0}- \sum_{h,w,l} \norm{ \left(z^{(h,w,l)} - \lambda v^{(h,w,l)}\right)^+}_2r_l + \lambda b\right)}.
\]
Thus we have a one-dimensional convex optimization problem to solve in order to get a lower bound on the original objective, which is all we need for the certification.  
\end{proof}

\section{Results for CIFAR10 with $\ell_2$-NPC}\label{app:cifar}
\textbf{CIFAR10 - $\ell_2$-NPC} In Table~\ref{tab:CIFAR-l2} we compare certified robust accuray (CRA) and an upper bound on the robust accuray (URA) of several models on CIFAR10 for $\ell_2$-threat model. Our $\ell_2$-\ours (800ppc) is slightly better than $\ell_2$-GLVQ (128ppc) in terms of clean accuracy, and robust accuracy for $\epsilon_2 \in \{0.1,36/255\}$, but $\ell_2$-GLVQ is better for $\epsilon_2=0.25$. Note that the $1$-$NN$ is significantly worse showing that learning the prototypes helps improving the performance. Nevertheless, all NPC models are not competitive with neural networks which is to be expected as the $\ell_2$-distance is not a good measure for image similarity. This is why we study \ours with the perceptual metric which achieves to clean accuracies which are higher than the one of neural networks with provable robustness guarantees.

Table~\ref{tab:CIFAR10-mult} shows the performance of $\ell_2$-NPC for multiple threat models. $\ell_2$-\ours outperforms $\ell_2$-GLVQ in terms of clean accuracy, $\ell_1$- and $\ell_2$-robust accuracy but is worse for $\ell_\infty$-robust accuracy and as this is the most difficult threat model it is also worse in the union. MMR-U outperforms the $\ell_2$-NPC but the margin is relatively small. 

\begin{table}[t]
\caption{\textbf{CIFAR10:} lower (CRA) and upper bounds (URA) on $\ell_2$-robust accuracy}\label{tab:CIFAR-l2}
\begin{center}
\begin{small}
\setlength{\tabcolsep}{2pt}
\begin{tabular}{l|c|cc|cc|cc|}
CIFAR10 & std. &  
\multicolumn{2}{c|}{$\epsilon_2=0.1$} & 
\multicolumn{2}{c|}{$\epsilon_2=36/255$}
& \multicolumn{2}{c|}{$\epsilon_2=0.25$}\\
& acc. & CRA & URA & CRA & URA & CRA & URA\\
\midrule
\ours & 49.2 & 43.9 & 43.9 & 41.9 & 41.9 & 36.4 & 36.4\\
GLVQ 
& 48.6 & 43.3  & 43.3 & 41.5 & 41.5 & 37.9 & 37.9\\
1-NN 
& 35.7 & 31.2 & - & 29.7 & 29.7 & 25.7 & -\\
\midrule
GloRob 
& 77.0 & - & - & 58.4 & 69.2 & - & - \\
LocLip 
& \textbf{77.4} & - & - & \textbf{60.7} & 70.4 & - & - \\
BCP 
& 65.7 & - & - & 51.3 & 60.8 & - & -\\
\midrule
SmoothLip$_{\sigma=0.25}$ 
&   77.1 & - & - & - & -  & \textbf{67.9$^*$}  & 67.9$^*$\\
\bottomrule
\end{tabular}
\end{small}
\end{center}
\vskip -0.1in
\end{table}

\begin{table}[t]
\caption{\textbf{CIFAR10:} lower (CRA) and upper bounds (URA) on robust accuracy for multiple threat models for our $\ell_2$-\ours, the $\ell_2$-NPC of \cite{Saralajew2020fast}, a 1-NN classifier. As comparison we show MMR-Univ of \cite{croce2020provable} which is a neural network specifically trained for certifiable multiple-norm robustness.\label{tab:CIFAR10-mult}}
\begin{center}
\begin{small}
\setlength{\tabcolsep}{2pt}
\begin{tabular}{l|c|cc|cc|cc|cc|}
CIFAR10 & std. & \multicolumn{2}{c|}{$\epsilon_1=2$} & 
\multicolumn{2}{c|}{$\epsilon_2=0.1$} & 
\multicolumn{2}{c|}{$\epsilon_\infty=2/255$} &\multicolumn{2}{c|}{union}\\
& acc. & CRA & URA & CRA & URA & CRA & URA & CRA & URA\\
\midrule
$\ell_2$-\ours & 49.2 & \textbf{42.5}  & 42.5 & 41.9 & 41.9 & 32.7 & 32.7& 32.7  & 32.7\\
$\ell_2$-GLVQ & 48.6 & 42.3 & 42.3 & 41.5 & 41.5 & 35.2 & 35.2 &  35.2 & 35.2 \\
1-NN  & 35.7 & 30.0 & - & 29.7 & 29.7 & 22.5 & - & 22.5 & -\\
\midrule
MMR-U & \textbf{53.0} & 36.6 & 43.6 & \textbf{46.4} & 48.1 & \textbf{36.2} & 36.2 & \textbf{35.4} & 36.2\\
\bottomrule
\end{tabular}
\end{small}
\end{center}
\vskip -0.1in
\end{table}

\newpage
\section{Comparison with orthogonal convolution networks}\label{app:SOC}

We evaluated the robustness orthogonal convolution networks on MNIST at radius $1.58$. According to the evaluation in~\cite{singla2022improved}, the currently best method for orthogonal convolution networks is to combine skew orthogonal convolutions with Householder activations. According to the official repository, they suggest to choose to set the following parameters
\begin{itemize}
    \item \texttt{--conv-layer} - We chose \texttt{soc} because it consistently outperformed baselines in the paper.
    \item \texttt{--activation} - We chose \texttt{hh1} activation, which is used in the experiments in the original paper.
    \item \texttt{--num-blocks} - We tried $1,2,4,6$ blocks, possible values are $1\dots8$. In the original paper, it did not seem that more blocks boost performance.
    \item \texttt{--gamma} - We tried $0, 0.1, 0.2, 0.5, 1$. The original experiments used $0.1$.
    \item \texttt{--lln} - The authors suggest to use last layer normalization when the number of classes is large, e.g., for CIFAR100, and do not use it for CIFAR10. We also did not use it.
\end{itemize}

We padded the MNIST images by $2$ black pixels, so that we can directly use the original architecture which relied on the fact that the input images are $32\times 32$. We also turned off the normalization by mean and variance as it is not commonly use for MNIST. We removed random horizontal flip from the set of possible augmentations, otherwise the setup is exactly as recommended. We note that the padding of MNIST image by $2$ pixels is likely not the optimal way how to adapt the network to work with MNIST dataset.

The orthogonal convolutions from~\cite{li2019preventing} reports $56.4\%$ certified robust accuracy. The method of~\cite{trockman2021orthogonalizing} yielded $54\%$ robust accuracy with the suggested setup.

\subsection{Empirical robustness}
We evaluated the empirical robustness of~\cite{singla2022improved} using AutoAttack which is a stronger attack than what the competing methods used in Table~\ref{tab:MNIST-l2}. Thus, we don't conclude that orthogonal convolutions are (significantly) less empirically robust than the other evaluated methods.

\begin{table}[t]
\caption{\textbf{MNIST:} Certified robust accuracy of networks with orthogonal convolutions. We computed robust accuracy after every epoch and the reported numbers are the maximal ones. The radius is $1.58$
\label{tab:MNIST-conv}}
\begin{center}
\begin{tabular}{|l||*{5}{c|}}\hline
\backslashbox{blocks}{$\gamma$}
&\makebox[3em]{$0$}&\makebox[3em]{$0.1$}&\makebox[3em]{$0.2$}
&\makebox[3em]{$0.5$}&\makebox[3em]{$1$}\\\hline\hline
$1$&57.17&58.23&58.75&58.82&58.57\\\hline
$2$&58.31&58.85&59.63&59.21&58.99\\\hline
$4$&59.50&60.75&\textbf{61.02}&60.33&58.82\\\hline
$6$&59.78&60.47&59.05&59.99&57.53\\\hline
\end{tabular}
\end{center}
\end{table}

\end{document}